\documentclass{article}

%

\usepackage{nips_2016}


\usepackage[utf8]{inputenc} 
\usepackage[T1]{fontenc}    
\usepackage{hyperref}       
\usepackage{url}            
\usepackage{booktabs}       
\usepackage{amsfonts}       
\usepackage{nicefrac}       
\usepackage{microtype}      
\usepackage{amsmath, amsthm}
\usepackage{amssymb,algorithm, algpseudocode, appendix}
\usepackage{graphicx, comment}
\usepackage{ifthen}
\usepackage[utf8]{inputenc}

\newtheorem{theorem}{Theorem}[section]

\newtheorem{lemma}[theorem]{Lemma}

\theoremstyle{definition}

\numberwithin{equation}{section}

\DeclareMathOperator*{\argmax}{arg\,max}

\newcommand{\normsq}[1]{\parallel #1 \parallel_2^2}
\newcommand{\norm}[1]{\parallel #1 \parallel_2}
\newcommand*{\E}{\mathbb{E}}

\title{Appendix: Guaranteed Tensor Decomposition via Orthogonalized Alternating Least Squares}

\begin{document}

\maketitle

\section{Local convergence}

\localconvergence*
\begin{proof}
	
	Without loss of generality, let the first factor $A_1$ have the largest correlation with the initial estimate. As before, we normalize all the correlations by the correlation of the largest factor, let $\hat{a}_{i,t+1} = \frac{a_{i,t}}{a_{1,t}}$ and normalize all the weights by the weight of the largest factor, $\hat{w}_{i} = \frac{w_{i}}{w_{1}}$. The update equations in terms of the ratio of correlations $\hat{a}_{i,t}$ are-
	\begin{align}
	\hat{a}_{i,t+1} &=\frac { \hat{w}_i \hat{a}_{i,t}^2+ c_{i,1}+ \sum_{j: j \ne \{i,1\}}^{}c_{i,j}\hat{w}_j \hat{a}_{j,t}^2}{1+\sum_{j: j \ne 1}^{}c_{1,j}\hat{w}_j \hat{a}_{j,t}^2}
	\end{align}
	By Lemma \ref{local_induction}, $\hat{w}_j\hat{a}_{j,t}^2 \le \gamma/(k c_{\max})^2, \forall j \ne 1$ hence $\sum_{j}^{}|c_{i,j}\hat{w}_j \hat{a}_{j,t}^2| \le \gamma/(kc_{\max}) < 1; \forall \; i$. 
	\begin{align}
	\frac{1}{1+\sum_{j: j \ne 1}^{}c_{1,j}\hat{w}_j \hat{a}_{j,t}^2} = 1 - \sum_{j: j \ne 1}^{}c_{1,j}\hat{w}_j \hat{a}_{j,t}^2+\epsilon_1
	\end{align}
	where $|\epsilon_1| \le \Big|\sum_{j: j \ne 1}^{}c_{1,j}\hat{w}_j \hat{a}_{j,t}^2\Big|^2$.
	We can now rewrite the updates for $\hat{a}_{i,t+1}$ as-
	\begin{align}
	\hat{a}_{i,t+1} &= \Big(c_{i,1}+\hat{w}_i \hat{a}_{i,t}^2 +\sum_{j: j \ne \{i,1\}}^{}c_{i,j}\hat{w}_j \hat{a}_{j,t} ^2\Big)\Big(1-\sum_{j: j \ne 1}^{}c_{1,j}\hat{w}_j \hat{a}_{j,t}^2+\epsilon_1\Big)
	\end{align}
	Let $\rho_{i,t}=c_{i,1}+\hat{w}_i \hat{a}_{i,t}^2 +\sum_{j: j \ne \{i,1\}}^{}c_{i,j}\hat{w}_j \hat{a}_{j,t} ^2 $. By Lemma \ref{local_induction}, $|\rho_{i,t}| \le \gamma$. Therefore-
	\begin{align}
	\hat{a}_{i,t+1} &= c_{i,1} + \hat{w}_i \hat{a}_{i,t}^2 +\sum_{j: j \ne \{i,1\}}^{}c_{i,j}\hat{w}_j \hat{a}_{j,t}^2 -\rho_{i,t}\sum_{j: j \ne 1}^{}c_{1,j}\hat{w}_j \hat{a}_{j,t}^2+\rho_{i,t}\epsilon_1\\
	\implies \Big|\hat{a}_{i,t+1}\Big| &\le \Big| c_{i,1} \Big| + \Big|\hat{w}_i\hat{a}_{i,t}^2\Big| +\Big|\sum_{j: j \ne \{i,1\}}^{}c_{i,j}\hat{w}_j \hat{a}_{j,t}^2 \Big|+\Big|\rho_{i,t}\sum_{j: j \ne 1}^{}c_{1,j}\hat{w}_j \hat{a}_{j,t}^2\Big| +\Big|\rho_{i,t}\epsilon_1\Big|\\
	\implies \Big|\hat{a}_{i,t+1}\Big| &\le \Big|c_{i,1} \Big| + \Big|\hat{w}_i\hat{a}_{i,t}^2\Big| +\Big|\sum_{j: j \ne \{i,1\}}^{}c_{i,j}\hat{w}_j \hat{a}_{j,t}^2 \Big|+\gamma\Big|\sum_{j: j \ne 1}^{}c_{1,j}\hat{w}_j \hat{a}_{j,t}^2\Big| +\gamma\Big|\epsilon_1\Big|\label{iter_local}
	\end{align}
	Now consider the recursion-
	\begin{align}
	\beta_{0} &= \max_{i\ne 1}{\Big|{w_i} \hat{a}_{i,0}\Big|} \label{beta_1_local}\\
	\beta_{t+1} &= \gamma c_{\max} + \beta_{t}^2 + 3\gamma^2 k \beta_{t}^2c_{\max}\label{beta_2_local}
	\end{align}
	By Lemma \ref{local_induction}, $|{w_i}\hat{a}_{i,t}|\le \beta_t, \;\forall\; i \ne 1$. Hence as before, we will analyze the iterations of $\beta$ instead of keeping track of the different $a_{i,t}$.
	\begin{lemma}\label{local_induction}
		The iterates at time $t$ satisfy the following-
		\begin{enumerate}
			\item  $|\hat{w}_j\hat{a}_{j,t}| \le \beta_t\;\forall \;t$
			\item $|\rho_{i,t}| \le \gamma\;\forall \;t$
			\item $|\hat{a}_{j,t}-c_{i,1}|\le 2\beta_{t-1}^2$
		\end{enumerate}
	\end{lemma}
	\begin{proof}
		We will prove the results by induction. By definition $|\hat{w}_j\hat{a}_{j,0}| \le \beta_0$, hence the base case is correct. Let the statement be true after $m$ steps. Note that by Lemma \ref{iteration_local}, $\beta_m \le O(1/(k c_{\max})) \implies |\hat{w}_j\hat{a}_{j,t}| \le O(1/(k c_{\max}))  \implies \hat{w}_j\hat{a}_{j,t}^2 \le O(1/(k c_{\max})^2)$.[add lemma] Also,
		\begin{align}
		\rho_{i,m}&= c_{i,1}+\hat{w}_i \hat{a}_{i,m}^2 +\sum_{j: j \ne \{i,1\}}^{}c_{i,j}\hat{w}_j \hat{a}_{j,m} ^2\\
		\implies \Big|\hat{w}_i\rho_{i,m}\Big| &\le \Big|\hat{w}_ic_{i,1}\Big|+\Big|\hat{w}_i^2 \hat{a}_{i,m}^2\Big| + \Big|\sum_{j: j \ne \{i,1\}}^{}c_{i,j}\hat{w}_j \hat{w}_i\hat{a}_{j,m} ^2\Big|\\
		&\le \gamma c_{\max} + \beta_m^2 + \gamma k c_{\max} \beta_{m}^2\\
		&\le \beta_{m+1}^2 \le 1
		\end{align}
		Hence $|\rho_{i,m}|\le \gamma$. Therefore, by Eq. \ref{iter_local},
		\begin{align}
		\Big|\hat{w}_i\hat{a}_{i,t+1}\Big| &\le \Big|\hat{w}_ic_{i,1} \Big| + \Big|\hat{w}_i^2\hat{a}_{i,t}^2\Big| +\Big|\sum_{j: j \ne \{i,1\}}^{}c_{i,j}\hat{w}_i\hat{w}_j \hat{a}_{j,t}^2 \Big|+\gamma\Big|\sum_{j: j \ne 1}^{}c_{1,j}\hat{w}_i\hat{w}_j \hat{a}_{j,t}^2\Big| +\gamma\Big|\hat{w}_i\epsilon_1\Big|\\
		&\le \gamma c_{\max} + \beta_m^2 + 3\gamma^2 k c_{\max} \beta_{m}^2\\
		&\le \beta_{m+1}
		\end{align}
		To show that $|\hat{a}_{j,t}-c_{i,1}|\le 2\beta_{t-1}^2$ we use Eq. \ref{recon} and repeat the steps used to show that $|\hat{w}_j\hat{a}_{j,t}| \le \beta_t\;\forall \;t$.
	\end{proof}
	\begin{lemma}\label{iteration_local}
		$\beta_t \le 2\gamma c_{\max} \; \forall \; t\ge O(\log \log k)$, also $\beta_t \le O(1/(k c_{\max})) \; \forall \; t$. 
	\end{lemma}
	\begin{proof}
		We divide the updates into two stages-
		\begin{enumerate}
			\item $\frac{1}{4 k} \le \beta_t^2$:
			
			Note that $k c_{\max} >1$ hence $\beta_t^2 < k  c_{\max}\beta_t^2$. As $\frac{1}{4 k} \le \beta_t^2$, $c_{\max}\le 4k c_{\max} \beta_{t}^2$. Hence we can write:
			\begin{align}
			\beta_{t+1} &\le  \gamma c_{\max} +  4\gamma k c_{\max} \beta_{t}^2\\
			&\le 5\gamma k c_{\max} \beta_{t}^2
			\end{align}
			Hence $\beta_{t+1} < 1/(10\gamma k c_{\max})$ if  $\beta_{t} < 1/(10\gamma k c_{\max})$, so  $\beta_{t} < 1/(10\gamma k c_{\max}) \; \forall \; t$. Also,
			\begin{align}
			\beta_{t+1} &\le  (5\gamma k c_{\max}\beta_0)^{2^t}
			\end{align}
			Note that $(5\gamma kc_{\max}\beta_0)^{2^t}\le \frac{1}{4\gamma k}$ for $t=O(\log\log k)$ and hence we stay in this regime for at most $t=O(\log\log k)$ steps.
			\item Note that in the next step $\beta_t \le 2\gamma c_{\max}$.
		\end{enumerate}
	\end{proof}
	Hence $\beta_t\ \le 2\gamma c_{\max}$ for some $t=O(\log \log k)$. By Lemma \ref{induction_global}, $|\hat{a}_{i,t}-c_{i,1}|\le O(c_{\max}^2), i \ne 1$. Hence $|\hat{a}_{i,t} |\le 2c_{\max}$. By Lemma \ref{error_converge}, the error at convergence satisfies $\normsq{A_1 - \hat{x}_{N}} \le O(kc_{\max}^2)$ and the estimate of the weight $bar{w}_1$ satisfies  $|w_1- \bar{w}_1| \le  O(k_{}c_{\max^2})$.
\end{proof}
\section{Global convergence of the tensor power method for random tensors}

The previous section gives global convergence guarantees for the tensor power method for incoherent tensors. Applying Theorem \ref{global_convergence} to a tensor whose factors are chosen uniformly at random, we can say that the tensor power method converges with random initialization whenever the rank $k=o(d^{0.25})$. Theorem \ref{global_convergence} also proves a linear convergence rate. However, this is quite suboptimal for random tensors. In this section, we use the randomness in the tensor to get much stronger convergence results.  

The techniques used in this section are very different from the rest of the paper. Instead of recursively analyzing the tensor power method updates by showing that the algorithm makes progress at every step by boosting its correlation with some fixed factor, we directly express the correlation of the factors with the estimate $Z_{\tau}$ after a fixed number of $\tau=O(\log \log d)$ time steps in terms of the initial correlations of the factors with the random initialization. This allows us to then skillfully use the randomness in the factors to get strong results. The difficulty with the recursive approach is that all the randomness in the tensor is ``lost'' after just one tensor power method update, i.e. the correlations of different factors with the estimate are no longer independent of each other, which makes the analysis much more difficult. 

\randomtensor*
\begin{proof}
	
	Without loss of generality, we will prove convergence to the first factor $A_1$. Let $\tau=5\log \log d^2$. As before, define $a_{i,t}=\langle A_i, Z_t \rangle$ where $Z_t$ is the iterate at time $t$. For the analysis of the tensor power method updates for random tensors we ignore the normalization step of the updates, till the last iteration. It is easy to see that this makes no difference in the analysis, though in practice it is important to normalize after every step to prevent the vectors from becoming too small and causing numerical errors. Recall that the update equations for ${a}_{i,t}$ for any $t$ are-
	\begin{align}
	{a}_{i,t} &={w}_i {a}_{i,t-1}^2+ c_{i,1}w_1a_{1,t-1}^2+ \sum_{j: j \ne \{i,1\}}^{}c_{i,j}{w}_j {a}_{j,t-1}^2 \label{recursion}
	\end{align}
	and the iterate $X_{\tau+1}$ at time $\tau+1$ can be written as
	\begin{align}
	Z_{\tau+1} = {w}_1 {a}_{1,\tau}^2A_1+ \sum_{i\ne 1}^{}{w}_i {a}_{i,\tau}^2A_i \nonumber
	\end{align}
	On expanding $w_1{a}_{1,\tau}^2$ recursively using Eq. \ref{recursion}, one of the terms that appears in the expansion is ${(w_1 a_{1,0})^{2^{\tau}}}/{w_1}$. Define $\alpha_{\tau} = {|w_1 a_{1,0}|^{2^{\tau}}}/{w_1}$. Let $\Delta_{\tau} = (1/\alpha_{\tau})\sum_{i\ne 1}^{}{w}_i {a}_{i,\tau}^2A_i$. We show that $\norm{\Delta_{\tau}}\le \tilde{O}(1/\sqrt{d})$ with failure probability at most $\log^{-1} d$.
	We can write $(1/\alpha_{\tau}){{w}_1 {a}_{1,\tau}^2}$, the coefficient for first factor $A_1$ normalized by $\alpha_{\tau}$, as follows
	\begin{align}
	\frac{{w}_1 {a}_{1,\tau}^2}{\alpha_{\tau}}&= \frac{(w_1 a_{1,0})^{2^{\tau}}}{w_1\alpha_{\tau}} + \frac{1}{\alpha_{\tau}}\Big( w_1 a_{1,\tau}^2 - \frac{(w_1 a_{1,0})^{2^{\tau}}}{w_1} \Big)\nonumber\\
	&= 1+\frac{1}{\alpha_{\tau}}\Big( w_1 a_{1,\tau}^2 - \frac{(w_1 a_{1,0})^{2^{\tau}}}{w_1} \Big)\nonumber
	\end{align}
	Let $\lambda_{\tau} = \frac{1}{\alpha_{\tau}}\Big( w_1 a_{1,\tau}^2 - \frac{(w_1 a_{1,0})^{2^{\tau}}}{w_1} \Big)$. Let $Z_{\tau+1}'=Z_{\tau+1}/\alpha_{\tau}$. We can write $Z_{\tau+1}'$ as
	\begin{align}
	Z_{\tau+1}'=(1+\lambda_{\tau})A_1+\Delta_{\tau} \nonumber
	\end{align}
	Note that $\frac{Z_{\tau+1}'}{\norm{Z_{\tau+1}'}}=\frac{Z_{\tau+1}}{\norm{Z_{\tau+1}}}$. Let $\frac{Z_{\tau+1}'}{\norm{Z_{\tau+1}'}}=\tilde{Z}_{\tau+1}$. We desire to show that the residual $\norm{\tilde{Z}_{\tau+1}-A_1}\le \tilde{O}(1/\sqrt{d})$. We can bound $\norm{\tilde{Z}_{\tau+1}-A_1}$ as follows using the triangle inequality,
	\begin{align}
	\norm{\tilde{Z}_{\tau+1}-A_1} \le  \Big|\frac{1+\lambda_{\tau}}{\norm{(1+\lambda_{\tau})A_1+\Delta_{\tau}}}-1\Big| + \frac{\norm{\Delta_{\tau}}}{\norm{(1+\lambda_{\tau})A_1+\Delta_{\tau}}}\nonumber
	\end{align}
	If $\norm{\Delta_{\tau}}\le \tilde{O}(1/\sqrt{d})$ and $|\lambda_{\tau}|\le d^{-\epsilon}$ then,
	\begin{align}
	\norm{\tilde{Z}_{\tau+1}-A_1} &\le \Big|\frac{1}{\norm{A_1+\Delta_{\tau}/(1+\lambda_{\tau})}}-1\Big|+ \frac{\norm{\Delta_{\tau}}}{1-|\lambda_{\tau}|-\norm{\Delta_{\tau}}}\nonumber\\ &\le  \frac{2\norm{\Delta_{\tau}}}{1-|\lambda_{\tau}|}+\frac{\norm{\Delta_{\tau}}}{1-|\lambda_{\tau}|-\norm{\Delta_{\tau}}}\le \tilde{O}(1/\sqrt{d}) \nonumber
	\end{align}
	If $\norm{\tilde{Z}_{\tau+1}-A_1} \le \tilde{O}(1/\sqrt{d})$ then, by Lemma \ref{error_converge}, the estimate of the weight $\hat{w}_1$ satisfies  $|1- \frac{\hat{w}_1}{w_1}| \le  \tilde{O}(1/\sqrt{d})$.	
	
	Hence we will show that $\norm{\Delta_{\tau}}\le \tilde{O}(1/\sqrt{d})$ and $|\lambda_{\tau}|\le d^{-\epsilon}$ with failure probability at most $\log^{-1} d$. Let $\epsilon_{\tau}=\norm{\Delta_{\tau}}^2$. We can write $\epsilon_{\tau}$ as
	\begin{align}
	\epsilon_{\tau}=\norm{\Delta_{\tau}}^2 &= \sum_{i\ne 1, j\ne 1}^{}(1/\alpha_{\tau}^2){w}_i w_j {a}_{i,\tau}^2 {a}_{j,\tau}^2 c_{i,j}\nonumber
	\end{align}	
	
	We can also write $\lambda_{\tau}^2$ as follows-
	\begin{align}
	\lambda_{\tau}^2 = (1/\alpha_{\tau}^2)w_1^2\Big(a_{1,\tau}^2 - \frac{(w_1 a_{1,0})^{2^{\tau}}}{w_1^2}\Big)^2\nonumber
	\end{align}
	
	Note that $\lambda_{\tau}$ has the same form as $\epsilon_{\tau}$ with the restriction that $i=j=1$ and the $\frac{(w_1 a_{1,0})^{2^{\tau}}}{w_1^2}$ in the expansion of $a_{1,\tau}^2$ is removed. 
	
	Our approach will be to recursively expand the ${a}_{i,\tau}^2$ terms to express $\epsilon_{\tau}$ and $\lambda_{\tau}$ only in terms of $a_{i,0}$ (the initial correlations at time 0),  the correlation between factors $c_{i,j}$ and the weights $w_i$. We use the recursion Eq. \ref{recursion} to do this. 
	
	We first consider the expansion of $a_{i,\tau}^2$ for any $i$ using recursion Eq. \ref{recursion}. $a_{i,t}^2$ can be written as a weighted sum of correlations of the factors with the iterate at the $(t-1)$st time step as follows using recursion Eq. \ref{recursion}-
	\begin{align}
	a_{i,t}^2 &= \Big(w_ia_{i,t-1}^2 + \sum_{j \ne i}^{}c_{i,j}w_j a_{j,t-1}^2\Big)^2\\
	&=  \sum_{j, k}^{} w_j w_k c_{i,j}c_{i,k}a_{j,t-1}^2a_{k,t-1}^2
	\end{align}
	Each term in the summation corresponds to two choices for the terms at time 
	$(t-1)$, the $j$ and $k$ variables. Continuing this recursive expansion for $\tau$ time steps, we can represent each monomial in the expansion by a complete binary tree with depth $\tau$. We label a node of the binary tree as $j$ if it corresponds to factor $A_j$. For ease of exposition, we will consider the initialization $Z_0$ as the 0th factor for the graph representation, hence $c_{i,0}=a_{i,0}$. The root of the tree is labeled as $i$ as it corresponds to the factor $A_i$. The descendants of the root $i$ are labelled as $j$ and $k$ if $a_{i,\tau}^2$ is expanded into $a_{j,\tau-1}^2$ and $a_{k,\tau-1}^2$ using recursion Eq. \ref{recursion}. The process is repeated at any step of the recursion, by expanding $a_{j,t}^2$ in terms of $a_{k,t-1}^2$ and $a_{l,t-1}^2$ for some $k$ and $l$. Refer to Fig. \ref{fig:example_tree} for an example of a monomial and it's binary tree representation.
	
	\begin{figure}
		\centering
		\includegraphics[width = 3 in]{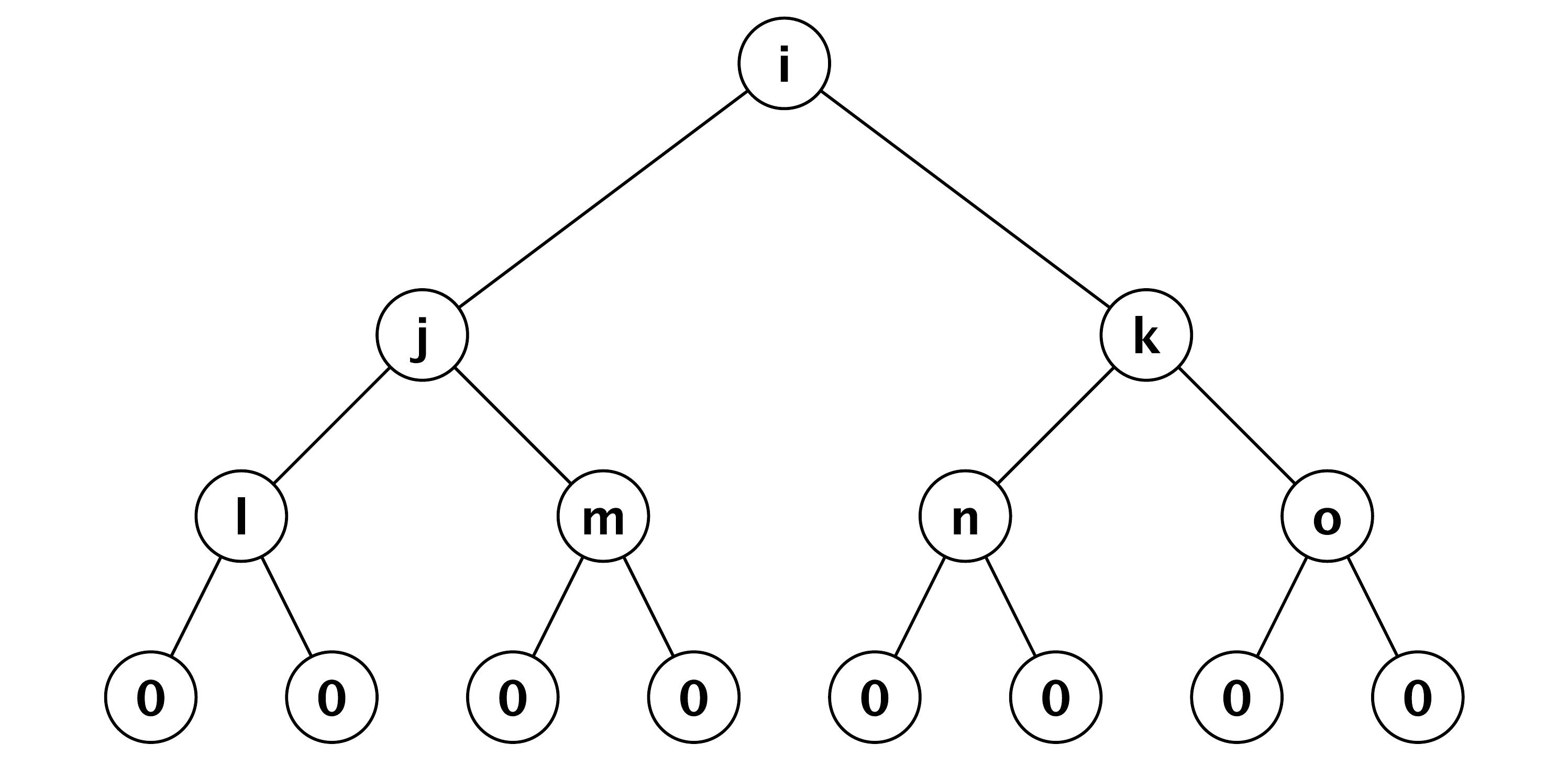}
		\caption{Example of a monomial in the expansion of $w_i a_{i,2}^2$ represented as a binary tree. The monomial is $c_{i,j}c_{i,j}c_{j,l}c_{j,m}c_{i,k}c_{k,n}c_{k,o}w_i w_j w_l w_m w_k w_n w_o a_{l,0}^2 a_{m,0}^2 a_{n,0}^2 a_{o,0}^2$}
		\label{fig:example_tree}
	\end{figure}
	
	Given any complete binary tree $B$, the monomial associated with the tree can be written down recursively. We write down the procedure for finding the monomial corresponding to a binary tree $B$ explicitly in Algorithm \ref{alg:recursion} for clarity.
	
	\begin{algorithm}\caption{Finding monomial $f$ from binary tree $B$}\label{alg:recursion}
		\textbf{Input:} Binary tree $B$, root $u$\\
		\verb|monomial|($B,u$)
		\begin{algorithmic}[1]
			\WHILE {$u$ is not a leaf}
			\STATE Set $i$ to be the factor corresponding to $u$
			\STATE Set $v$ to be the left child of $u$, set $j$ to be the factor corresponding to $v$
			\STATE Set $w$ to be the right child of $u$, set $j$ to be the factor corresponding to $w$
			\STATE $f=fw_i$
			\STATE  $f=fc_{i,j}\verb|monomial|(B,v)$
			\STATE $f=fc_{i,j} \verb|monomial|(B,w)$
			\ENDWHILE 
			\STATE \textbf{return} $f$
		\end{algorithmic}
	\end{algorithm}

	Therefore, by successively using Eq. \ref{recursion}, we expand $w_i a_{i,1}^2$ in terms of the correlations of the factors with the random initialization $Z_0$ (the $a_{j,0}^2$ factors) and define a complete binary tree $B_f$ for every monomial $f$ in the expansion. We also define a graph $G_f$ for the monomial $f$ by coalescing nodes of the binary tree having the same label and removing self-loops. We allow more than one edge between two nodes. 
	
	For any monomial $f$ in the expansion of $(1/\alpha_{\tau}^2){w}_i w_j {a}_{i,\tau}^2 {a}_{j,\tau}^2 c_{i,j}$ in $\epsilon_{t}$, we construct two binary trees corresponding to the expansion of ${w}_i {a}_{i,\tau}^2$ and $w_j {a}_{j,\tau}^2$. We construct the graph $G_f$ by adding an edge between the roots of the two binary trees (this corresponds to the $c_{i,j}$ term) and then coalescing nodes of the new graph having the same label and removing self-loops, while allowing multiple edges between two nodes. The same procedure is followed for the expansion of $\lambda_{\tau}^2$, with the difference that now $i=j=1$, and the $\frac{(w_1 a_{1,0})^{2^{\tau}}}{w_1^2}$ term in the expansion of $a_{1,\tau}^2$ is removed. 
	
	\subsection{Choosing a suitable basis for the factors}
	
	Without loss of generality, assume that the first $(n-1)$ factors are present in $G_f$, for some $n$. The $(n-1)$ vectors corresponding to the $(n-1)$ factors and the initialization $Z_0$ span a $n$ dimensional subspace. We will choose a particular basis $\{v_i\}, i\in [n]$ for the $n$ dimensional subspace and express the factors with respect to that basis. $v_1=Z_0$, and $v_i$ is unit vector along the projection of $A_{i-1}$ orthogonal to $\{A_j, j < i-1\}$. In terms of this basis, $Z_0=(1,0,\dotsb, 0)$. Let the 1st factor $A_1$ have component $x_{1,1}$ along the first coordinate axis and $u_{1,2}$ along the second coordinate axis. Note that $x_{1,1}$ is distributed as $\frac{\tilde{x}_{1,1}}{r_1}$ and $u_{1,2}$ is distributed as $\frac{\tilde{u}_{1,2}}{r_1}$ where $\tilde{x}_{1,1} \sim N(0,1/{d})$, $\tilde{u}_{1,2} \sim v_1\sqrt{\sum_{i=2}^{d}\tilde{y}_{1,i}^2}$ and $r_1=\sqrt{\tilde{x}_{1,1}^2+\tilde{u}_{1,2}^2}$. Here $\tilde{y}_{1,i} \sim N(0,1/{d})$ and $v_1$ is uniform on $\{-1,+1\}$. Similarly, the 2nd factor $A_2$ has components $(x_{2,1},x_{2,2},u_{2,3})\sim \Big(\frac{\tilde{x}_{2,1}}{r_2},\frac{\tilde{x}_{2,2}}{r_2},\frac{\tilde{u}_{2,3}}{r_2}\Big)$ along the first three coordinate axes. Here $\tilde{x}_{2,1}, \tilde{x}_{2,2} \sim N(0,1/{d})$ and $\tilde{u}_{2,3} \sim v_2\sqrt{\sum_{i=3}^{d}\tilde{y}_{2,i}^2}$ and $r_2=\sqrt{\tilde{x}_{2,1}^2+\tilde{x}_{2,2}^2+\tilde{u}_{2,3}^2}$, where $\tilde{y}_{2,i} \sim N(0,1/{d})$ and $v_2$ is uniform on $\{-1,+1\}$. We continue this projection for all subsequent factors. 
	
	We first prove a Lemma that bounds the magnitude of the projection of any factor along the basis vectors.
	
	\begin{lemma}The projection of $n$ factors along the basis defined above has the following properties- \label{lem:scaling}
		\begin{enumerate}
			\item $ 1-\frac{1}{d^{0.25}}\le r_i^2 \le 1+\frac{1}{d^{0.25}} \; \forall \; i \in [n]$ with failure probability at most $2ne^{-\sqrt{d}/8}$.
			\item $|\tilde{x}_{i,j}|\le \log^5 d/\sqrt{d}$ for all valid $i,j$ (i.e. for all $j<i, i \in [n]$) with failure probability at most $n(\frac{1}{d})^{\log^8 d}$.
		\end{enumerate}
	\end{lemma}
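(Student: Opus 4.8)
\textbf{Proof proposal for Lemma~\ref{lem:scaling}.}

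The plan is to realize each factor as a normalized Gaussian vector and then exploit rotational invariance of the Gaussian \emph{conditioned on the earlier factors}. Concretely, write $A_i = \tilde{A}_i/r_i$ where $\tilde{A}_i\in\Real^d$ has i.i.d.\ $N(0,1/d)$ coordinates and $r_i=\norm{\tilde{A}_i}$; this is exactly the uniform distribution on the unit sphere. The crucial structural observation is that the basis $\{v_1,v_2,\dots\}$ is built only out of $Z_0$ and $A_1,\dots,A_{i-1}$, hence it is a measurable function of $(Z_0,A_1,\dots,A_{i-1})$ and, in particular, independent of $\tilde{A}_i$. Therefore, conditioned on $A_1,\dots,A_{i-1}$, the vector $\tilde{A}_i$ is still $N(0,I_d/d)$, and by rotational invariance its coordinates with respect to the (now fixed) orthonormal basis obtained by completing $v_1,\dots,v_i$ to a full orthonormal basis of $\Real^d$ are i.i.d.\ $N(0,1/d)$. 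This immediately identifies the distributions claimed in the setup: $\tilde{x}_{i,j}\sim N(0,1/d)$ for $j<i$; the orthogonal part $\tilde{u}_{i,i+1}$ equals, up to the recorded sign (which the statement models as an independent Rademacher and which is immaterial for magnitude bounds), the norm of the projection of $\tilde{A}_i$ onto $\mathrm{span}\{v_1,\dots,v_i\}^{\perp}$, whose square is a sum of $d-i$ i.i.d.\ $N(0,1/d)^2$ terms; and $r_i^2=\sum_{j=1}^d(\text{coord})^2$ is $\tfrac1d$ times a $\chi^2_d$ random variable.

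For part~1, $r_i^2=\tfrac1d\sum_{j=1}^d g_j^2$ with $g_j\sim N(0,1)$ i.i.d., so $\E[r_i^2]=1$, and by a standard $\chi^2$ concentration bound (Laurent--Massart, or Bernstein for sub-exponential variables) one gets $\Pr\!\big[\,|r_i^2-1|>t\,\big]\le 2e^{-dt^2/8}$ for $0<t\le 1$. Taking $t=d^{-1/4}$ yields $\Pr\!\big[\,|r_i^2-1|>d^{-1/4}\,\big]\le 2e^{-\sqrt{d}/8}$, and a union bound over the $n\le k$ factors gives the claimed failure probability $2ne^{-\sqrt{d}/8}$. For part~2, each $\tilde{x}_{i,j}$ with $j<i$ satisfies $\sqrt{d}\,\tilde{x}_{i,j}\sim N(0,1)$, so the Gaussian tail bound gives $\Pr\!\big[\,|\tilde{x}_{i,j}|>\log^5 d/\sqrt{d}\,\big]\le 2e^{-\log^{10}d/2}$. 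There are at most $\binom{n}{2}<n^2$ valid pairs $(i,j)$, and since $n\le k=o(d)$ we have $\log n<\log d$, whence $n^2\cdot 2e^{-\log^{10}d/2}\le e^{-\log^{9}d}=(1/d)^{\log^8 d}$ for $d$ large; a union bound over all valid pairs is thus bounded by $n\,(1/d)^{\log^8 d}$ as claimed. (The stated failure probabilities are deliberately loose, so the precise constants in these tail bounds are irrelevant.)

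The only part that requires genuine care — and what I would expect to be the main obstacle — is making the sequential conditioning rigorous: one must verify that $v_1,\dots,v_i$ are measurable with respect to $\sigma(Z_0,A_1,\dots,A_{i-1})$, conclude that the conditional law of $\tilde{A}_i$ given this $\sigma$-algebra is still $N(0,I_d/d)$, and then apply rotational invariance to the \emph{fixed} (conditioned) basis, together with the bookkeeping for the arbitrary-but-fixed sign convention used to define $v_{i+1}$. Once this is set up, the inequalities reduce to a textbook $\chi^2$ concentration estimate and a Gaussian tail estimate followed by a union bound over the (at most $n$ and $n^2$ respectively) relevant quantities.
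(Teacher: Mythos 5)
Your proposal is correct and follows essentially the same route as the paper: part 1 is the identical $\chi^2$ sub-exponential tail bound with $t=d^{-1/4}$ plus a union bound over the $n$ factors, and part 2 is the Gaussian tail bound plus a union bound over the valid $(i,j)$ pairs (which the paper dispatches in one line). The extra care you take in justifying the conditional distribution of $\tilde{A}_i$ given the earlier factors, via measurability of the basis and rotational invariance, is a sound and slightly more rigorous treatment of a setup the paper simply asserts in the text preceding the lemma.
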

	\begin{proof}
		The proof relies on basic concentration inequalities. 
		
		\begin{enumerate}
			\item 
		
		Consider the vector $(x_{i,1},\dotsb, u_{i,i+1}, 0\dotsb, 0)$ corresponding to factor $i$. The squared scaling factor $r_i^2$ is distributed as $r_i^2 \sim (\tilde{x}_{i,1}^2+\dotsb+\tilde{x}_{i,i}^2+\tilde{u}_{i,i+1}^2)$, where $\tilde{u}_{i,i+1}^2 \sim \tilde{y}_{i,i+1}^2+ \dotsb+\tilde{y}_{i,d}^2$, the $\tilde{y}_{i,j}$ are independent $N(0,1/d)$ random variables. $r_i^2$ is the sum of squares of independent zero mean Gaussian random variables each having variance $1/d$, and hence $x_i^2=dr_i^2$ is a $\chi^2$ random variable with $d$ degrees of freedom. We use the following tail bound on a $\chi^2$ random variable $x$ with $d$ degrees of freedom (the bound follows from the sub-exponential property of the $\chi^2$ random variable)
		\begin{align}
		   \Pr[|x^2-d|\ge d t] \le 2e^{-dt^2/8} \nonumber
		\end{align}
		Choosing $t=d^{-0.25}$, $\Pr[|x_i^2-d|\ge d^{0.75}] \le 2e^{-\sqrt{d}/8}$. Therefore $\Pr[|r_i-1|\ge d^{-0.25}] \le 2e^{-\sqrt{d}/8}$. By a union bound, $|r_i^2-1| \le  \frac{1}{d^{0.25}}\; \forall \; i \in [n]$ with failure probability at most $2ne^{-\sqrt{d}/8}$.
		\item The bound follows directly from basic Gaussian tail bounds (refer to Eq. \ref{concentration}) and a union bound.
	\end{enumerate}
	\end{proof}
	Note that as $\tau=5\log\log d^2$, the total number of nodes of the binary tree corresponding to a monomial is at most $2^{\tau+1}=2\log^5 d^2$. As each monomial corresponds to two binary trees, the number of number in the graph $G_f$ can be at most $4\log^5 d^2$. Let $N=4\log^5 d^2$. We can now use a union bound to argue that the properties of the factors in Lemma \ref{lem:scaling} hold with high probability for any set of $N$ factors. We define $\beta_0=\max\Big\{\Big|\frac{w_ix_{i,1}}{w_1 x_{1,1}}\Big|,i\ne 1 \Big\}$ and $\beta_t = \beta_0^{2^t}$ for any $t$.
	
	\begin{lemma}\label{lem:bound_coord}
		Consider the projection of any set of $N=4\log^5 d^2$ factors. With failure probability at most $1/d^{\log d}$, $|x_{i,j}|\le 2\log^3 d/\sqrt{d}$ for all valid $i,j$ (i.e. for all $j<i, i \in [N]$). Also, with failure probability at most $1/\log d$, $\beta_0 \le 1-1/\log^4 k$.
	\end{lemma}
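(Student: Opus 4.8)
The plan rests on the conditional description of the coordinates already introduced above: write $A_i=g_i/\norm{g_i}$ with $g_i\sim N(0,I_d/d)$ and expand $g_i$ in an orthonormal basis extending $v_1,\dots,v_i$ — a basis depending only on $Z_0,A_1,\dots,A_{i-1}$, hence independent of $g_i$. Then the coordinates of $g_i$ along $v_1,\dots,v_i$ are exactly the i.i.d.\ $N(0,1/d)$ variables $\tilde x_{i,1},\dots,\tilde x_{i,i}$, the orthogonal part has squared length $\tilde u_{i,i+1}^2$, and $r_i=\norm{g_i}$ is the common normalizer, so that $x_{i,j}=\tilde x_{i,j}/r_i$. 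Note in particular $x_{i,1}=\langle A_i,Z_0\rangle=a_{i,0}$, and across distinct $i$ the tuples $(\tilde x_{i,1},\dots,\tilde x_{i,i})$ are independent given the nested bases.

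For the first claim I would just combine a Gaussian tail bound with the normalizer control already proved. Since $\sqrt d\,\tilde x_{i,j}\sim N(0,1)$, $\Pr[|\tilde x_{i,j}|>\log^3 d/\sqrt d]\le 2e^{-\log^6 d/2}$. A set of $N=4\log^5 d^2$ factors contributes at most $N^2=\polylog(d)$ valid pairs $(i,j)$, so a union bound over these pairs together with the event $r_i^2\ge 1-d^{-1/4}\ge 1/4$ of Lemma~\ref{lem:scaling} (which fails with probability at most $2Ne^{-\sqrt d/8}$) gives $|x_{i,j}|\le 2|\tilde x_{i,j}|\le 2\log^3 d/\sqrt d$ for all valid $(i,j)$, except with probability $\polylog(d)\cdot e^{-\log^6 d/2}+2Ne^{-\sqrt d/8}\ll e^{-\log^2 d}=1/d^{\log d}$. (If one needs this simultaneously over every $N$-subset of the $k\le d$ factors, the extra $d^{O(\log^5 d)}$ union factor is still dominated by $e^{-\log^6 d/2}$, or one falls back on the cruder $\log^5 d$-bound of Lemma~\ref{lem:scaling}.)

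For the second claim I would treat it as an extreme-value estimate for the $k-1$ ratios $|w_i x_{i,1}/(w_1 x_{1,1})|$, $i\ne 1$, recalling that our WLOG convention makes $A_1$ the factor maximizing $w_i|a_{i,0}|$. Conditioned on $Z_0$, the $a_{i,0}=x_{i,1}$ are i.i.d.\ with $\sqrt d\,a_{i,0}$ sub-Gaussian and close to standard normal — the random-factor Gram matrix is $I+O(\sqrt{k/d})$ in operator norm, so cross-correlations shift this by a negligible $\tilde{\Oh}(d^{-3/2})$. First I would show $M:=\max_i w_i|a_{i,0}|$ lies in $[\tfrac13\sqrt{\log k},\,10\gamma\sqrt{\log k}]/\sqrt d$ except with probability $d^{-\Omega(1)}$ (using $\gamma\le\polylog(d)$); this rules out a degenerately small maximum. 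Then, conditioning on the value $t$ of $M$ and the index $j^\star$ that attains it, Gaussian anti-concentration bounds the chance that some other weighted correlation falls in the relative window of width $1/\log^4 k$ just below $t$: that window occupies a fraction $\lesssim (t\sqrt d)^2/\log^4 k\lesssim \gamma^2\log k/\log^4 k$ of the relevant tail mass, so $\Pr[\exists\, i\ne j^\star:\ w_i|a_{i,0}|>(1-1/\log^4 k)t]\lesssim (\gamma^2\log k/\log^4 k)\sum_{i\ne j^\star}\Pr[w_i|a_{i,0}|>t]$. Integrating this against the density of $M$ (the residual sum at the typical level $t\sqrt d\asymp\sqrt{\log k}$ contributes only $\Oh(1)$) yields $\Pr[\beta_0>1-1/\log^4 k]\lesssim \gamma^2\log k/\log^4 k$, which is $o(1/\log d)$ in the parameter range of the theorem. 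The weak gap $1/\log^4 k$ suffices — rather than the $k^{-1-\epsilon}$ gap of Lemma~\ref{good_start_whp} — because the top two order statistics of $k$ Gaussians are already spaced by a relative $\Theta(1/\log k)$, and because only $\tau=\Oh(\log\log d)$ power-method steps are taken, for which $\beta_0\le 1-1/\log^4 k$ is exactly the slack making $\beta_0^{2^\tau}\le d^{-\Omega(1)}$, since $\log^4 k\le\log^4 d$.

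The main obstacle is the quantitative half of the second claim: pushing the failure probability for the top-two gap down to the stated bound while (a) keeping the polylogarithmic weight ratio $\gamma$ out of the exponent — it perturbs the order statistics and the identity of the argmax but must cost only polynomially — and (b) coping with the maximum being itself a random level, so that a naive union bound over the $\binom{k}{2}$ pairs at a fixed threshold either over-counts by $k^2$ or discards the exponential suppression near the top; the conditioning-on-$M$ decomposition above is what reconciles these. The first claim and the distributional set-up are, by contrast, routine applications of Gaussian tail bounds and Lemma~\ref{lem:scaling}.
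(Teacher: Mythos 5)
Your approach matches the paper's. For the first claim the paper does exactly what you describe as your fallback: it takes the threshold $\log^5 d/\sqrt d$ from Lemma \ref{lem:scaling} (per-coordinate failure probability $(1/d)^{\log^8 d}$), controls the normalizers $r_i$, and union-bounds over the at most $k^N\le d^N$ subsets of $N$ factors. Your primary route does not survive that union bound: with $N=4\log^5 d^2$ the number of subsets is $d^N=e^{\Theta(\log^6 d)}$, which is \emph{not} dominated by the tail probability $e^{-\log^6 d/2}$ at threshold $\log^3 d/\sqrt d$ --- the two exponents are of the same order and the union factor wins. So the constant $\log^3 d$ in the lemma statement cannot be obtained this way; the paper's own proof in fact only establishes $2\log^5 d/\sqrt d$ (the $\log^3$ in the statement appears to be a typo), and the union over all $N$-subsets is genuinely needed because the lemma is later applied to whichever factors happen to appear in a monomial's graph.

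For the second claim the paper simply invokes Lemma \ref{lem:random_ratio} (with $h=\log^5 k$) and then corrects for the $r_i$'s; the proof of that lemma is precisely the conditioning-on-the-maximum argument you sketch. The point you yourself flag as the main obstacle --- keeping $\gamma$ out of the bound --- is resolved there but not in your sketch: your estimate $\Pr[\beta_0>1-1/\log^4 k]\lesssim \gamma^2\log k/\log^4 k$ exceeds the target $1/\log d$ once $\gamma$ is a moderately large polylogarithm, which the hypotheses of Theorem \ref{random_tensor} allow. The paper's lemma avoids any $\gamma$ dependence by (i) choosing the threshold $\kappa$ so that $\sum_i\Pr[|u_i|\ge\kappa]=\Theta(\log k)$, and (ii) splitting the factors according to whether $t_i=m/w_i$ exceeds $\log k$: for small $t_i$ the conditional density at $m(1-1/h)$ is bounded by $(t_i+1)\Pr[|u_i|\ge\kappa]/\min\{m,w_i\}$ and summing gives $O(\log^3 k/m)$, while for large $t_i$ the density is exponentially small; this yields failure probability $\log^4 k/h$ with no $\gamma$ factor. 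Supplying that case split is the one substantive step missing from your proposal.
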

	\begin{proof}
		Using Lemma \ref{lem:scaling} and a union bound, $|\tilde{x}_{i,j}|\le (\log d)^5/\sqrt{d}$ for all valid $i,j$ and $|r_i^2-1| \le  \frac{1}{d^{0.25}}\; \forall \; i \in [N]$ with failure probability at most $N(\frac{1}{d})^{\log^8 d}+2Ne^{-\sqrt{d}/8} \le 2N(\frac{1}{d})^{\log^8 d}$. As $x_{i,j}=\tilde{x}_{i,j}/r_i$, therefore $x_{i,j}\le  2\log^5 d/\sqrt{d}$ whenever $\tilde{x}_{i,j}\le \log^5 d/\sqrt{d}$ and $r_i\ge 1-\frac{1}{d^{0.25}}$. Therefore, as the total number of sets of $N$ factors is at most $k^N\le d^N$, by doing a union bound over all possible sets of $N$ factors, $|x_{i,j}|\le 2\log^5 d/\sqrt{d}$ for all valid $i,j$ with failure probability at most $2d^N N(\frac{1}{d})^{\log^8 d}\le 1/d^{\log d}$.
		
		Using Lemma \ref{lem:random_ratio}, with failure probability at most $1/\log d$, $\Big|\frac{w_i\tilde{x}_{i,1}}{w_1 \tilde{x}_{1,1}}\Big| \le 1-1/\log^5 k$ for all $i\ne 1$. As $|r_i^2-1| \le  \frac{1}{d^{0.25}}\; \forall \; i \in [k]$ with failure probability at most $2ke^{-\sqrt{d}/8}$, therefore with failure probability at most $2/\log d$, $\Big|\frac{w_ix_{i,1}}{w_1 x_{1,1}}\Big| \le 1-0.5/\log^5 k$ for all $i\ne 1$. 
	\end{proof}
	
	Let $E$ be the event that for any projection of up to $N$ factors $|x_{i,j}|\le 2\log^3 d/\sqrt{d}$ for all valid $i,j$ (i.e. for all $j<i, i \in [n]$) and $\beta_0 \le 1-1/\log^4 k$. By Lemma \ref{lem:bound_coord}, probability of the event $E$ is at least $(1-3/\log d)$. We condition on the event $E$ for the rest of the proof.
	
	\subsection{Characterizing when the monomial has non-zero expectation}
	
	Let $f_2$ refer to the product of all $a_{i,0}^2$ terms, all the weights $w_i$ for any $i$ appearing in $f_{}$ and $1/\alpha_{\tau}^2$. Let $f_1$ refer to all the terms in $f_{}$ not present in $f_2$, hence $f_{}=f_1f_2$.  Let $G'_f$ be the graph obtained by removing the node corresponding to the initialization $X_0$ and all it's edges from $G_f$. Note that $G'_f$ is a connected graph, as the 0th factors only appears in the leaves of the binary tree.
	
	As the $c_{i,j}$ terms are inner products between the factors, we can write $c_{i,j}$ in terms of the co-ordinates of the vectors $A_i$ and $A_j$, in terms of the basis we described previously. Note that $a_{i,0} = {x}_{i,1}$ hence there is only one term in the inner product $a_{i,0}$. $f_1$ is a product of the cross-correlation terms $c_{i,j}$, hence it can be written as the summation of a product of a choices of coordinate for every $c_{i,j}$ term. Let the terms obtained on rewriting $f_{1}$ in terms of the coordinates of the vectors be $g_i$, hence $f_{1} =\sum_{i=1}^{K}g_i$.
	
	\begin{lemma}\label{lem:euler}
		$f_{}$ has non-zero expectation only if $G_f$ is Eulerian. Also, every term $g_i$ having non-zero expectation corresponds to choosing a split of $G'_f$ into a disjoint union of cycles and then choosing a single coordinate for all inner products $c_{i,j}$ which are part of a particular cycle.
	\end{lemma}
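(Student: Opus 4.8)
The plan is to evaluate $\E[f]$ by expanding each cross‑correlation $c_{i,j}$ in the basis $\{v_i\}$ fixed above and tracking, for every scalar random variable that appears, the parity of the power to which it occurs. Since the factor directions are (conditioned on $E$) essentially independent Gaussians, a monomial survives in expectation only if every such power is even, and both the Eulerian condition and the cycle decomposition are graph‑theoretic restatements of ``all parities even.'' Concretely, recall $f=f_1f_2$, where $f_1=\prod_{e\in E(G'_f)}c_{i(e)j(e)}$ and $f_2$ collects the $a_{i,0}^2$ terms, the weights, and $1/\alpha_\tau^2$; in the chosen basis $a_{i,0}=x_{i,1}$ and $1/\alpha_\tau^2=w_1^{\,2-2^{\tau+1}}/x_{1,1}^{\,2^{\tau+1}}$, so $f_2$ is a product of deterministic weight factors and \emph{even} powers of the first coordinates $x_{i,1}$. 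For $i>j$ one has $c_{i,j}=\sum_{\ell=1}^{j}x_{i,\ell}x_{j,\ell}+x_{i,j+1}u_{j,j+1}$, so expanding $f_1$ amounts to choosing for each edge $e=(i,j)$ a coordinate $\ell(e)\in\{1,\dots,\min(i,j)+1\}$; write $g$ for the resulting monomial, so $f_1=\sum_g g$. Conditioned on $E$ (the scalings $r_i$ fixed and in $1\pm o(1)$) the coordinates $x_{i,\ell}$ are independent mean‑zero Gaussians, and the signs $v_i$ inside $u_{i,i+1}$ are mutually independent and independent of the $\tilde x,\tilde y$; since the weights are nonzero, a Gaussian or $|u|$ even moment is positive while an odd Gaussian or odd $v_i$ moment vanishes, so $\E[g f_2]\ne 0$ iff every variable $x_{i,\ell}$ (for $\ell\le i$) and every $u_{i,i+1}$ occurs to an even power in $g f_2$.

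Next comes the combinatorial heart, the slot bookkeeping. Call $x_{i,\ell}$ (for $\ell\le i$) and $u_{i,i+1}$ (for $\ell=i+1$) the \emph{$\ell$-slot of vertex $i$}. The key point is that an edge $e=(i,j)$ with chosen coordinate $\ell(e)$ contributes exactly one factor to the $\ell(e)$-slot of $i$ and one to the $\ell(e)$-slot of $j$ — this holds also in the boundary case $\ell(e)=\min(i,j)+1$, because then, with $i>j$, $e$ contributes $x_{i,j+1}$, the $(j{+}1)$-slot of $i$, and $u_{j,j+1}$, the $(j{+}1)$-slot of $j$ — and that the $\ell$-slot of a vertex $i$ is touched \emph{only} by $f_2$ (when $\ell=1$, an even number of times) and by edges incident to $i$ whose chosen coordinate is $\ell$. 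Consequently $\E[g f_2]\ne 0$ iff for every vertex $i$ and every coordinate $\ell$ the number of edges of $G'_f$ incident to $i$ with $\ell(e)=\ell$ is even.

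From this the Eulerian claim is immediate. If $\E[f]\ne 0$ then some $g$ survives, and summing the parity condition over all $\ell$ shows that $\deg_{G'_f}(i)$ is even for every factor vertex $i$. The extra edges of $G_f$ incident to the $0$-node arise from the leaf terms $a_{i,0}^2=c_{i,0}^2$ and hence appear in pairs, so $\deg_{G_f}(i)\equiv\deg_{G'_f}(i)\pmod 2$ for factor vertices and $\deg_{G_f}(0)$ is even as well. Since $G_f$ is connected, it is Eulerian.

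Finally, for the cycle decomposition, colour each edge of $G'_f$ by its chosen coordinate $\ell(e)$. The parity condition of the second step says precisely that within each colour class $G_\ell$ every vertex has even degree, so by the classical fact that a multigraph with all even degrees decomposes into edge‑disjoint cycles, each $G_\ell$ is a disjoint union of cycles; taking the union over $\ell$ exhibits $G'_f$ as a disjoint union of cycles, each monochromatic, i.e.\ carrying a single coordinate — exactly the form asserted for $g$. Conversely, any decomposition of $G'_f$ into cycles together with one coordinate assigned to each cycle yields, at every vertex and every colour, an even count (each cycle through a vertex uses two edges there), hence a $g$ with $\E[g f_2]\ne 0$. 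The step I expect to be the main obstacle is exactly this slot bookkeeping: one must verify carefully that the asymmetric $u$-term is handled so that ``$e$ is coloured $\ell$'' is synonymous with ``$e$ touches the $\ell$-slot of both its endpoints,'' since that equivalence is what collapses the per‑variable parity requirement into the clean statement that every colour class has all even degrees.
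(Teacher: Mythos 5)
Your proof is correct and follows essentially the same route as the paper's: expand each $c_{i,j}$ in the fixed basis, use sign-symmetry and independence of the coordinate signs to reduce non-vanishing expectation to an ``every coordinate variable appears to an even power'' condition, deduce even degrees (hence Eulerian), and propagate the chosen coordinate around cycles. Your per-colour-class bookkeeping and appeal to Veblen's theorem within each colour class is a slightly cleaner packaging of the paper's iterative ``follow the coordinate until it closes a cycle'' argument, but it is the same idea.
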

	
	\begin{proof}
		
	We claim that every node in $G_f$ must have even degree for $f_{}$ to have non-zero expectation. To verify, consider any node $j$ which has odd degree. Note that the 0th node corresponding to the initialization $X_0$ always has even degree, hence $j \ne 0$. $\E[f_{}]$ is the expectation of the product of all correlation terms $c_{i,j}$ and $a_{i,0}$ appearing in the monomial. Each inner product $c_{i,j}$ involves a $x_{i,t}$ term or $u_{i,t}$ term for some coordinate $t$. Hence if node $i$ has odd degree, then there is at least some $t$ such that $x_{i,t}$ or $u_{i,t}$ is raised to an odd power. Note that the sign of $x_{i,t}$ or $u_{i,t}$ is an independent zero mean random variable, hence the expectation evaluates to 0 in this case. Hence every node in $G_f$ must have even degree for $f_{}$ to have non-zero expectation. By Euler's theorem every node in a graph has even degree if and only if the graph is Eulerian (there exists a trail in the graph which uses every edge exactly once and returns to its starting point). Also, an Eulerian graph can be written as a disjoint union of cycles (Veblen's theorem).
	
	$G'_f$ is also Eulerian and can be written as a disjoint union of cycles as every node has an even number of edges to node 0 and hence removal of these edges preserves the Eulerian property.
	
	We now prove the second part of the Lemma, that every term $g_i$ having non-zero expectation corresponds to choosing a split of $G'_f$ into a disjoint union of cycles and then choosing a single coordinate for all inner products $c_{i,j}$ which are part of a particular cycle. To verify this, let's start at any node $i$ and consider it's inner product with a neighbor $j$. Say we choose coordinate $t$ for the inner product $c_{i,j}$ which leads to a $x_{i,t}x_{j,t}$ term in $g_i$. To ensure that $g_i$ has non-zero expectation, $x_{j,t}$ must appear in the term an even number of times (as the sign of $x_{j,t}$ is an independent zero mean random variable). Hence the coordinate $t$ must be chosen in the inner product of node $j$ with some neighbor of $j$. By repeating this argument, there must exist a cycle $C$ with node $i$ such that the coordinate $t$ is chosen for all correlation terms in that cycle $C$. We then repeat the process on the graph obtained by removing the edges corresponding to cycle $C$ from $G'_f$. Hence every $g_i$ term having non-zero expectation corresponds to choosing a split of $G'_f$ into a disjoint union of cycles and then choosing a single coordinate for all inner products $c_{i,j}$ which are part of a particular cycle.
\end{proof}
We let $f_1'=\sum_{i:\E[g_i]\ne 0}^{}g_i$ and $f'=f_1'f_2$. We claim that $\E[f]=\E[f']$. Consider any term $g_i$, such that $\E[g_i]= 0$. We claim that $\E[g_if_2]$ also equals 0, hence $\E[f]=\E[f']$. This is because if $g_i$ has zero expectation, then there is some $x_{i,t}$ term raised to an odd power, as otherwise the expectation is non-zero. But, as all terms are raised to an even power in $f_2$, the $x_{i,t}$ term is also raised to an odd power in $g_if_2$, which implies that $\E[g_if_2]=0$. This verifies the claim that $\E[g_if_2]=0$ if $\E[g_i]=0$. 
	
	\subsection{Bounding expected value of monomial}\label{subsec:bounding_exp}
	
	We are now ready to bound the expected value of $f_{}$. Note that $\E[f_{}]=0$ if $G_f$ is not Eulerian. If $G_f$ and hence $G'_f$ are Eulerian, split $G'_f$ into some disjoint union of cycles. Say we split $G'_f$ into $p$ cycles $\{C_1, C_2, \dotsb, C_p\}$ with $m_1, m_2, \dotsb, m_p$ edges. Let $D(C_j)$ refer to the choice of coordinate $D(C_j)$ for cycle $C_j$. Let $g(\cup_j C_j(D(C_j)))$ be the term in the expansion of $f_{}$ corresponding to a split of $G_f$ into cycles $\{C_1, C_2, \dotsb, C_p\}$ and the choice of coordinate $D(C_j)$ for cycle $C_j$. We also define $h(C_j(D(C_j)))$ as the product of terms corresponding to cycle $C_j$ and the choice of coordinate $D(C_j)$ for the cycle $C_j$. Note that $g(\cup_j C_j(D(C_j))) = \Pi_{j=1}^{p}h( C_j(D(C_j)))$. We can write
	\begin{align}
		g(\cup_j C_j(D(C_j)))] &= \Pi_j h_{}( C_j(D(C_j)))\nonumber
	\end{align}
	$h_{}( C_j(D(C_j)))$ is the product of the square of the $D(C_j)$-th co-ordinate of all the factors appearing in the cycle $C_j$. Conditioned on the event $E$, there is only one factor having a component greater than $\log^{5} d/\sqrt{d}$ in absolute value along the $D(C_j)$-th co-ordinate axis, hence 
	\begin{align}
	h_{}( C_j(D(C_j)))\le \frac{(\log^{10} d)^{m_1-1}}{d^{m_1-1}}\nonumber
	\end{align}
	Hence, conditioned on event $E$, we can bound $g(\cup_j C_j(D(C_j)))$ as-
	\begin{align}
	g(\cup_j C_j(D(C_j))) &\le  \frac{(\log^{10} d)^{m/2}}{d^{m-p}}\nonumber
	\end{align}
	Let $c(G_f)$ be the largest $p$ such that $G'_f$ can be decomposed into a union of $p$ disjoint cycles. There can be at most $m/2$ disjoint cycles in $G'_f$ as there are $m$ edges, therefore $c(G_f)\le m/2$. Each edge can be placed in one of the total number of possible cycles, hence the total number of ways of splitting $G'_f$ into a disjoint union of cycles is at most $(m/2)^{m}$. Also, there are $n$ possible choices for a coordinate for each cycle, hence there are at most $n^{(m/2)}$ terms corresponding to the same split of $G'_f$ into a disjoint union of cycles. Hence for any particular monomial $f$, the number of possible $g_i$ terms having non-zero expectation is at most $(m/2)^{m}n^{(m/2)}$. Note that $m\le 2\log^5d^2$ as the graph $G'_f$ is constructed by collapsing the two binary trees corresponding to monomial $f$. Each binary tree has depth $\tau=5\log \log d^2$, hence the number of edges is at most $2\log^5d^2$. Hence the total number of edges in graph $G'_f$ is at most $4\log^5d^2\le \log^6d$. Hence we can bound $\E_{|E}[f_{}]$ as-
	\begin{align}
	 \E_{|E}[f_{}]\le f_{}' \le (m/2)^{m}n^{(m/2)}\frac{(\log^{10} d)^{m/2}}{d^{m-c(G_f)}}f_{2}\le \frac{(\log^{10} d)^{5m/2}}{d^{m-c(G_f)}}f_{2}\label{eq:mono_bound}
	\end{align}
	We will now bound the $f_{2}$ term. Let $\theta=\max\Big\{\Big|\frac{w_ix_{i,1}}{w_1 x_{1,1}}\Big|\Big\}$ over all nodes $i\in G'_f$. Clearly $\theta\le \beta_0$ if node 1 is not in $G$ and is at most 1 otherwise. We will consider the representation of the monomial $f$ as two complete binary trees. Recall that the leaves of the binary tree correspond to the 0th factor. Each pair of leaves having the factor $i$ as their parent corresponds to a $a_{i,0}^2$ term. We will pair every leaf node with it's successor, regarding the binary tree as a binary search tree. Note that the left child of any node has the same node as it's successor. Let the right child of the node with factor $i$ have a node with factor $j$ as it's successor. We group the $w_i$ term due to the successor of the left child and $w_j$ term due to the successor of the right child together with the $a_{i,0}^2$ term. We bound the $w_j w_i a_{i,0}^2$ term by $\gamma (w_i a_{i,0})^2$ whenever $j \ne i $ and by $(w_i a_{i,0})^2$ when $j = i$. If all the edges from the successor to the leaf are self-loops of the form $c_{j,j}$, then $j=i$. Note that the paths of all leaves of a binary tree to their successor are disjoint, hence each cross-correlation term $c_{i,j}, i\ne j$ can lead to at most one leaf with $j\ne i$. The number of cross-correlation terms equals $m$, the number of edges in the graph $G'_f$. Recall that $\alpha_{\tau}=|w_1 a_{1,0}|^{2^{\tau}}/w_1$. Therefore the product of all the $w_i$ and $a_{i,0}^2$ terms normalized by $\alpha_{\tau}$ is at most $ \gamma^m\theta^{2^{\tau}}$.
	
	\begin{framed}
	As an example, consider the monomial $f=w_2^2w_1^4(c_{1,2})^4(a_{1,0})^8$. The binary tree $B_f$ corresponding to $f$ is given in Fig. \ref{random_ex2}. Both binary trees are the same in this case. The graph $G_f$ obtained by coalescing the two binary trees is given in Fig. \ref{random_ex1}.
	\begin{enumerate}
		\item \emph{Projecting factors onto suitable basis:} We can write the initialization $Z_0$ as the vector $(1,0\dotsb, 0)$. We write the factor $A_1$ as $(x_{1,1},u_{1,2},0,\dotsb, 0)$. Similarly, the 2nd factor $A_2$ has components $(x_{2,1},x_{2,2},u_{2,3})$. Using Lemma \ref{lem:bound_coord}, $\max\{|x_{1,1}|,|x_{2,1}|,|x_{2,2}|\}\le \log^{5} d/\sqrt{d}$. 
		\item \emph{Writing expectation of $f$ as product of expectation of cycles:} Let $f_1=(c_{1,2})^4$. Let $f_2=(a_{1,0})^8=(x_{1,0})^8$. $f_{}$ can be expanded by choosing a coordinate corresponding to each $c_{1,2}$ term, and then summing across all choices. Let the terms obtained on rewriting $f_1$ in terms of the co-ordinates of the factors $A_1$ and $A_2$ be $g_i$, hence $f_{} =\sum_{j=1}^{K}g_i$. By Lemma \ref{lem:euler}, every term $g_i$ having non-zero expectation corresponds to choosing a split of $G'_f$ into a disjoint union of cycles and then choosing a single coordinate for all inner products $c_{i,j}$ which are part of a particular cycle. Say we split $G'_f$ into the union of cycles $C_1$ and $C_2$ where $C_1$ and $C_2$ are 2 edge cycles between node 1 and node 2. Say we choose the 2nd coordinate for both the cycles $C_1$ and $C_2$. Following the notation of subsection \ref{subsec:bounding_exp}, $D(C_1)=D(C_2)=2$ and $g(C_1(2)\cup C_2(2))$ is the term in the expansion of $f$ corresponding to split of $G'_f$ into cycles $C_1$ and $C_2$ and then choosing the second coordinate for both cycles. $g(C_1(2)\cup C_2(2))=h(C_1(2))h(c_2(2))=x_{1,2}^4u_{1,2}^4\le \log^{10} d/d^2$, again following the notation of subsection \ref{subsec:bounding_exp}. Recalling the definition of  $c(G_f)$ be the largest $p$ such that $G'_f$ can be decomposed into a union of $p$ disjoint cycles, for our example, $c(G_f)=2$. As each edge can be placed in one of the two possible cycles and there are 4 edges, the total number of ways of splitting $G'_f$ into a disjoint union of cycles is at most $2^4$. There are $2$ possible choices for coordinates for each cycle as we have two factors. Hence we can bound $f'$ and $\E[f]$ as -
		\begin{align}
		\E[f_{}]\le f_{}' \le 2^4 4^2\frac{(\log^{10} d)^{2}}{d^{2}}f_{2}\le \frac{(\log^{10} d)^{10}}{d^{2}}f_{2}
		\end{align}
	\end{enumerate} 
		\begin{center}
			\includegraphics[width=3 in]{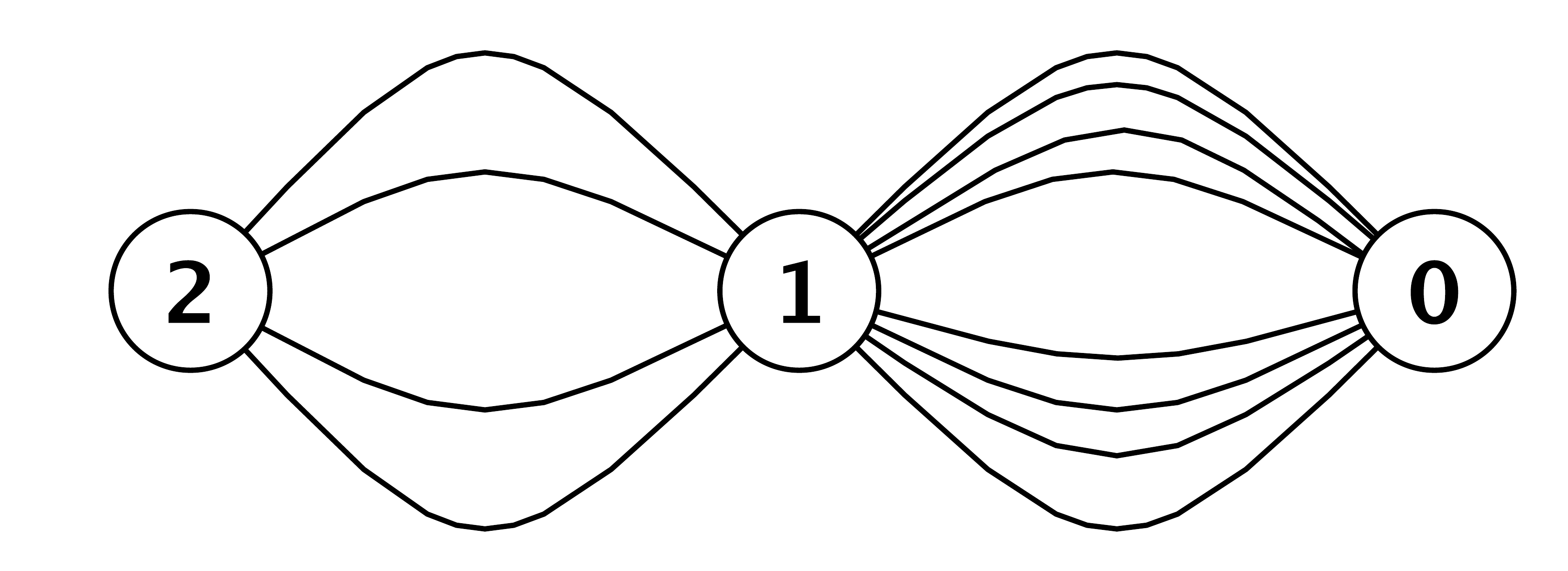}
			\captionof{figure}{Graph $G_f$ for $f=w_2^2w_1^4(c_{1,2})^4(a_{1,0})^8$}
			\label{random_ex1}
					\begin{center}
						\includegraphics[width=3 in]{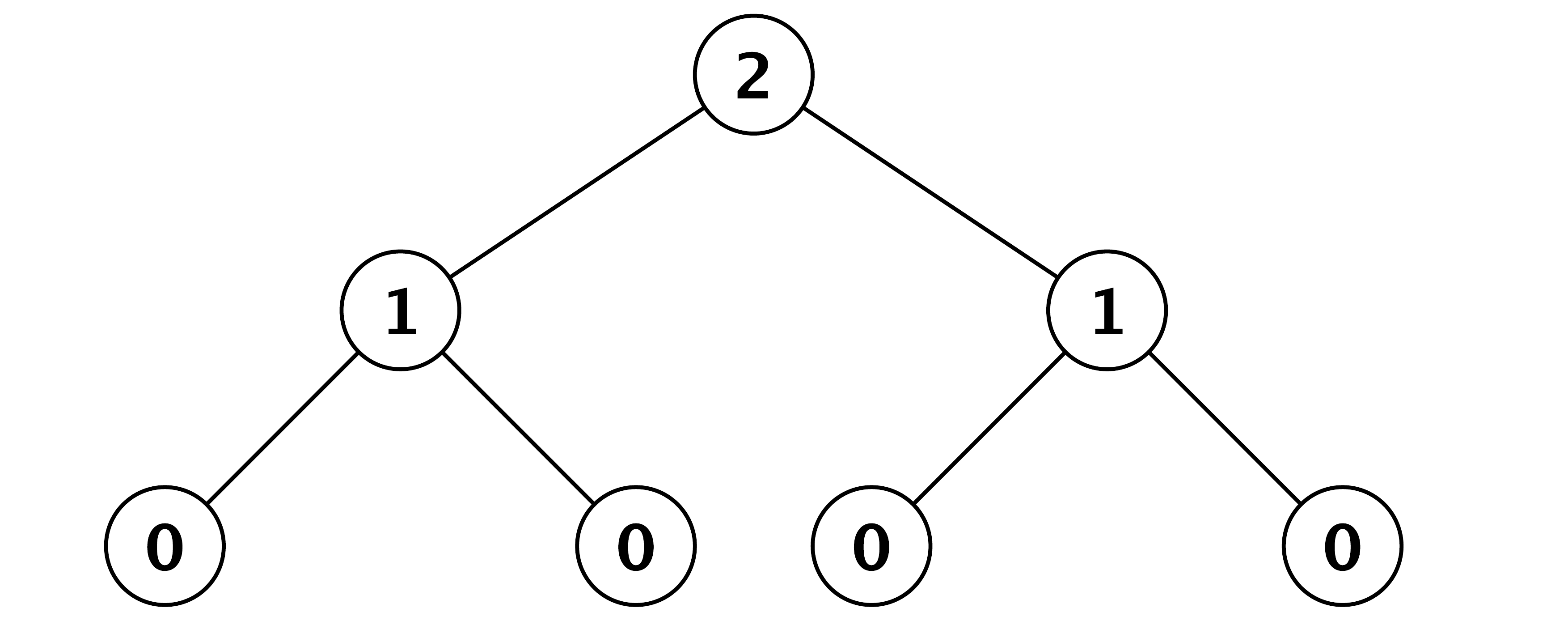}
						\captionof{figure}{Binary tree $B_f$ for $f=w_2^2w_1^4(c_{1,2})^4(a_{1,0})^8$ (both binary trees for $f$ are the same)}
						\label{random_ex2}
					\end{center}
		\end{center}
	\end{framed}
	
	We are now ready to bound $\epsilon_{\tau}$. We will divide $\epsilon_{\tau}$ into 2 sets of monomials and bound each one of them separately-
	
	\begin{enumerate}
		\item All monomials with root nodes $i$ and $j$ and with either no path from node $i$ to node 1 or no path from node $j$ to node 1. We call this set $S_1$.
		\item All monomials with root nodes $i$ and $j$ and at least two paths from node $i$ to node 1 and at least two paths from node $j$ to node 1. We call this set $S_2$.
	\end{enumerate}
	Note that the number of paths between two nodes in the graph $G'_f$ is always even if $f$ has non-zero expectation, as $G'_f$ is Eulerian in that case. We need to relate the number of nodes and edges of an Eulerian graph for the rest of the proof, Lemma \ref{lem:euler_graph} provides a simple bound.
	
		\begin{restatable}{lemma}{eulergraph}\label{lem:euler_graph}
			For any connected Eulerian graph $G$, let $N$ be the number of nodes and $M$ be the number of edges. Consider any decomposition of $G$ into a edge-disjoint set of $p$ cycles. Then, $N\le M-p+1$. Moreover, if $G$ has four edge-disjoint paths between a pair of nodes then $N\le M-p$.
		\end{restatable}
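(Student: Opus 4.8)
The right tool here is the cycle space $\mathcal{C}(G)$ over $\mathbb{F}_2$: the space of edge subsets inducing an even-degree subgraph, with symmetric difference $\oplus$ as addition. For a connected (multi)graph, $\dim \mathcal{C}(G) = M - N + 1$, so both claims are lower bounds on $\dim\mathcal{C}(G)$. For the first claim, a decomposition $E(G) = C_1 \sqcup \cdots \sqcup C_p$ into edge-disjoint cycles gives $p$ nonzero elements of $\mathcal{C}(G)$, and they are linearly independent over $\mathbb{F}_2$: if $\bigoplus_{i\in S} C_i = \emptyset$ for some nonempty $S$, then because the $C_i$ are pairwise edge-disjoint this symmetric difference equals $\bigcup_{i\in S} C_i \neq \emptyset$, a contradiction. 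Hence $p \le \dim\mathcal{C}(G) = M - N + 1$, i.e.\ $N \le M - p + 1$.

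For the second claim I would show that $\{C_1,\dots,C_p\}$ does \emph{not} span $\mathcal{C}(G)$; since these $p$ vectors are independent but not spanning, $\dim\mathcal{C}(G) \ge p+1$, which gives $N \le M - p$. Let $P_1,P_2,P_3,P_4$ be four edge-disjoint $u$–$v$ paths. For $i\neq j$ the set $Q_{ij} := P_i \oplus P_j$ has every vertex of even degree (degree $2$ at $u$ and $v$, and degree $0$, $2$, or $4$ elsewhere), so $Q_{ij}\in\mathcal{C}(G)$, and as an edge set $Q_{ij} = E(P_i)\cup E(P_j)$. Suppose for contradiction that $\{C_1,\dots,C_p\}$ spans $\mathcal{C}(G)$; since the $C_i$ partition $E(G)$, they then form a basis indexed by subsets of $[p]$, so every element of $\mathcal{C}(G)$ is a union of a subcollection of the $C_i$, and an edge lying in such an element forces its entire cycle-block to lie in that element. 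Fix $e\in E(P_1)$ and let $C_\ell$ be the cycle of the decomposition containing $e$. Since $e$ lies in $Q_{12}$, $Q_{13}$, and $Q_{14}$, we get $C_\ell \subseteq Q_{12}\cap Q_{13}\cap Q_{14} = (E(P_1)\cup E(P_2))\cap(E(P_1)\cup E(P_3))\cap(E(P_1)\cup E(P_4)) = E(P_1)$, using that the $P_i$ are pairwise edge-disjoint. But $P_1$ is a simple path, hence acyclic, so it cannot contain the cycle $C_\ell$ — contradiction. Therefore $\{C_1,\dots,C_p\}$ is not spanning, and $N \le M - p$.

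\textbf{Main obstacle.} The only real subtlety is making airtight the step ``$C_\ell$ lies inside each $Q_{1j}$'': this rests on the observation that a set of pairwise edge-disjoint cycles whose union is all of $E(G)$, if it spans $\mathcal{C}(G)$, is automatically a basis in which every cycle-space element is literally the union of the corresponding $C_i$'s, so membership of a single edge propagates to its whole cycle. One should also fix the multigraph conventions carefully (parallel edges may yield $2$-cycles, self-loops are excluded) so that every $C_i$ is a nonempty even subgraph and ``$P_1$ is acyclic'' genuinely applies. As a fallback that avoids the cycle space entirely, one can prove the first bound by adding the cycles $C_1,\dots,C_p$ one at a time and tracking, by induction, the inequality $|V(G_j)| \le |E(G_j)| - j + (\text{number of components of } G_j)$, the inductive step using that a fresh cycle on $m$ vertices meeting the current graph in $r$ old components adds $m$ edges but at most $m-r$ net new vertices; a refinement of this bookkeeping — charging an extra unit at the step where the fourth edge-disjoint path closes a redundant loop — recovers the $-p$ bound as well.
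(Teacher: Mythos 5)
Your proof is correct, and it takes a genuinely different route from the paper's. The paper argues the bound $N\le M-p+1$ by a direct combinatorial count: order the cycles so that each newly added cycle shares at least one vertex with the union of the previous ones (possible by connectivity), so each of the $p-1$ later cycles contributes at most its edge count minus one in new vertices; for the second claim the paper shows that equality $N=M-p+1$ forces every cycle to meet the previously grown subgraph in exactly one vertex, and then derives a contradiction with four edge-disjoint $u$--$v$ paths by growing a sub-union of cycles from one path toward another until some cycle must attach at two vertices. You instead work in the cycle space $\mathcal{C}(G)$ over $\mathbb{F}_2$, reading both claims as lower bounds on $\dim\mathcal{C}(G)=M-N+1$: edge-disjointness gives linear independence of $C_1,\dots,C_p$ (first claim), and the element $Q_{12}=P_1\oplus P_2$ cannot be spanned by them because any spanning representation would force an entire cycle $C_\ell$ inside $Q_{12}\cap Q_{13}\cap Q_{14}=E(P_1)$, which is acyclic (second claim). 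Your key step --- that a spanning family of pairwise edge-disjoint cycles is a basis in which every cycle-space element is literally a union of blocks, so edge membership propagates to the whole block --- is airtight given that the $C_i$ partition $E(G)$. What the linear-algebra route buys is a crisper second half: the paper's ``grow until a cycle attaches at two nodes'' argument leaves the existence of the intermediate nodes $s,t$ and the terminating step somewhat informal, whereas non-spanning plus independence immediately yields $\dim\mathcal{C}(G)\ge p+1$. Your argument also reveals that three pairwise edge-disjoint $u$--$v$ paths already suffice for the stronger bound (intersecting $Q_{12}$ and $Q_{13}$ alone pins $C_\ell$ inside $E(P_1)$); the hypothesis of four in the lemma is only there because edge-disjoint path counts between two vertices of an Eulerian graph come in even numbers. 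The paper's approach, in exchange, is entirely elementary and needs no cycle-space machinery. Both are valid; yours is the tighter argument.
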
  
	
	We first consider the set $S_1$. As there are no paths from node $i$ to node 1 or from node $j$ to node 1, therefore $\theta\le \beta_0$ for at least one of the binary trees. Therefore $f_2\le \gamma^m\beta_0^{2^\tau}=\gamma^m \beta_{\tau}$. For any graph $G_f$ with $n$ nodes, there can be at most $k^n\le d^n$ monomials having a graph isomorphic to $G_f$ as their representation. By Lemma \ref{lem:euler_graph}, $n\le m-c(G_f)+1$. The total number of graphs with $n$ nodes and $m$ edges is be at most $(n^2)^m$. As the graph $G'_f$ is connected, $n \le m$. Note that the number of edges can be at most $4\log^5 d^2 \le \log^6 d$. Hence we can bound the contribution of all monomials in the set $S_1$ as follows-
	\begin{align}
		\sum_{f:f\in S_1}^{}\E[f] &\le \sum_{m=0}^{\log^6 d} k^{m-c(G_f)+1}\frac{(m^2)^{m}(\log^{10}d)^{5m/2}\gamma^m}{d^{m-c(G_f)}}\beta_t\nonumber\\
		&\le \sum_{m=0}^{\log^{6} d}k^{m-c(G_f)+1}\frac{(\gamma \log^{55}d)^{m}}{d^{m-c(G_f)}}\frac{1}{d^2}\nonumber\\
		&\le \frac{1}{d}\sum_{m=0}^{\log^6 d}(\gamma\log^{55}d)^{m}\Big(\frac{k}{d}\Big)^{m-c(G_f)}\nonumber\\
		&\le \frac{1}{d}\sum_{m=0}^{\log^6 d}(\gamma \log^{55}d)^{m}\Big(\frac{1}{d^\epsilon}\Big)^{m/2}\nonumber
		\le \frac{1}{d}\sum_{m=0}^{\infty}\Big(\frac{\gamma\log^{55}d}{d^{0.5\epsilon}}\Big)^{m}\le \frac{2}{d}\nonumber
	\end{align}
	
	We next consider the set $S_2$. For any graph $G_f$ with $n$ nodes with at least one of the nodes corresponding to factor $A_1$, there can be at most $nk^{n-1}\le nd^{n-1}$ monomials having a graph isomorphic to $G_f$ as their representation as there are $n$ possible positions to place the factor $A_1$ and at most $d^{n-1}$ ways to label the remaining nodes. We claim that by Lemma \ref{lem:euler_graph}, $n\le m-c(G_f)$. This is because there are two paths from node $i$ to node 1 and two paths from node $j$ to node 1. Note that there is always an edge between nodes $i$ and $j$, as we connect the roots of the binary trees by an edge. Hence there are at least three edge-disjoint paths between nodes $i$ and $j$. But there cannot be an odd number of edge-disjoint paths between 2 nodes in an Eulerian graph, hence there must be at least four edge-disjoint paths between nodes $i$ and $j$. Hence by Lemma \ref{lem:euler_graph}, $n\le m-c(G_f)$. Also, note that the number of edges $m\ge4$ for monomials in $S_2$ as there are two paths from node $i$ to node 1 and two paths from node $j$ to node 1. Hence we can bound the contribution of all monomials in the set $S_2$ as follows-
	\begin{align}
	\sum_{f:f\in S_2}^{}\E[f] &\le \sum_{m=4}^{\log^6 d} k^{m-c(G_f)-1}\frac{m(m^2)^{m}(\log^{10}d)^{5m/2}\gamma^m}{d^{m-c(G_f)}}\nonumber\\
	&\le \sum_{m=4}^{\log^{6} d}k^{m-c(G_f)-1}\frac{(\gamma\log^{55}d)^{m}}{d^{m-c(G_f)}}\nonumber\\
	&\le \frac{1}{d}\sum_{m=4}^{\log^6 d}(\gamma\log^{55}d)^{m}\Big(\frac{k}{d}\Big)^{m-c(G_f)-1}\nonumber\\
	&\le \frac{1}{d}\sum_{m=4}^{\log^6 d}(\gamma\log^{55}d)^{m}\Big(\frac{1}{d^\epsilon}\Big)^{m/2-1}\nonumber \le \frac{1}{d}\sum_{m=4}^{\log^6 d}(\gamma\log^{55}d)^{m}\Big(\frac{1}{d^\epsilon}\Big)^{m/10}\nonumber\\
	&\le \frac{1}{d}\sum_{m=4}^{\log^6 d}\Big(\frac{\gamma\log^{55}d}{d^{0.1\epsilon}}\Big)^{m}\nonumber\le \frac{1}{d}\sum_{m=4}^{\infty}\Big(\frac{\gamma\log^{55}d}{d^{0.1\epsilon}}\Big)^{m}\le \frac{2}{d}\nonumber
	\end{align}
	
	$\lambda_{\tau}$ is composed of monomials with at least one correlation ($c_{1,i}$) term for $i\ne 1$. Also, all graphs for monomials corresponding to the expansion of $\lambda_{\tau}$ must include a node with label $A_1$. As before, for any graph $G_f$ with $n$ nodes with at least one of the nodes corresponding to factor $A_1$, there can be at most $nk^{n-1}\le nd^{n-1}$ monomials having a graph isomorphic to $G_f$ as their representation. By Lemma \ref{lem:euler_graph}, $n\le m-c(G_f)+1$. Hence we can bound $\lambda_{\tau}$ as follows,
	\begin{align}
	\E[\lambda_{\tau}] &\le \sum_{m=1}^{\log^6 d} k^{m-c(G_f)}\frac{m(m^2)^{m}(\log^{10}d)^{5m/2}\gamma^m}{d^{m-c(G_f)}}\nonumber\\
	&\le \sum_{m=1}^{\log^{6} d}k^{m-c(G_f)}\frac{(\gamma\log^{55}d)^{m}}{d^{m-c(G_f)}}\nonumber\\
	&\le \sum_{m=1}^{\log^6 d}(\gamma\log^{55}d)^{m}\Big(\frac{k}{d}\Big)^{m-c(G_f)}\nonumber\\
	&\le \sum_{m=1}^{\log^6 d}(\gamma\log^{55}d)^{m}\Big(\frac{1}{d^\epsilon}\Big)^{m/2}\nonumber 
	\le \sum_{m=1}^{\log^6 d}\Big(\frac{\gamma\log^{55}d}{d^{0.5\epsilon}}\Big)^{m}\nonumber\\
	&\le \sum_{m=1}^{\infty}\Big(\frac{\gamma\log^{55}d}{d^{0.5\epsilon}}\Big)^{m}\le \frac{1}{d^{\epsilon'}}\nonumber
	\end{align}
	for some $\epsilon'>0$.
	We now use Markov's inequality to get high probability guarantees
	\begin{align}
	\Pr\Big[\epsilon_{\tau}\ge \log d/d\Big] \le {4}/{\log^2d}\nonumber\\
	\Pr\Big[\lambda_{\tau}\ge \log d/d^{\epsilon'}\Big] \le {1}/{\log^2d}\nonumber
	\end{align}
	Hence we have shown that $\norm{\Delta_{\tau}}\le \tilde{O}(1/\sqrt{d})$ and $|\lambda_{\tau}|\le d^{-\epsilon}$ with failure probability at most $\log^{-1} d$.
\end{proof}

\input{recon}
\section{Proof of convergence for Orth-ALS}

The proof of convergence of Orth-ALS for incoherent tensors mirrors the proof for orthogonal tensors in Section \ref{sec:orthogonal_tensors}. For clarity, we will try to stick to the proof for the orthogonal case as far possible, while also providing proofs for intermediate Lemmas which were stated without proof in Section \ref{sec:orthogonal_tensors}.

\orthalsconvergence*
\begin{proof}
	Without loss of generality, we assume that the $i$th recovered factor converges to the $i$th true factor. Note that the iterations for the first factor are the usual tensor power method updates and are unaffected by the remaining factors. Hence by Theorem \ref{global_convergence}, Orth-ALS correctly recovers the first factor $O(\log k + \log\log d)$ steps with probability at least $(1-\log^5 k/k^{1+\epsilon})$, for any $\epsilon>0$. 
	
	We now prove that Orth-ALS also recovers the remaining factors. The proof proceeds by induction. We have already shown that the base case is correct and the algorithm recovers the first factor. We next show that if the first $(m-1)$ factors have converged, then the $m$th factor converges in $O(\log k + \log \log d)$ steps with failure probability at most $\tilde{O}(1/{k^{1+\epsilon}})$. The main idea is that as the factors have small correlation with each other, hence orthogonalization does not affect the factors which have not been recovered but ensures that the $m$th estimate never has high correlation with the factors which have already been recovered. Recall that we assume without loss of generality that the $i$th recovered factor $X_i$ converges to the $i$th true factor, hence $X_{i} = A_i + \hat{\Delta}_i$ for $i<m$, where $\norm{ \hat{\Delta}_i} \le 10\gamma k \eta^2$. This is our induction hypothesis, which is true for the base case as by Theorem \ref{global_convergence} the tensor power method updates converge with residual error at most $10\gamma k\eta^2$. 
	
	Let $X_{m,t}$ denote the $m$th factor estimate at time $t$ and let $Y_m$ denote it's value at convergence. We will first calculate the effect of the orthogonalization step on the correlation between the factors and the estimate $X_{m,t}$. Let $\{\bar{X}_i, i<m\}$ denote an orthogonal basis for $\{X_i, i<m\}$. The basis  $\{\bar{X}_i, i<m\}$ is calculated via QR decomposition, and can be written down as follows,
	\begin{align}
	\bar{X}_i = \frac{{X}_{i} - \sum_{j<i}^{}\bar{X}_j^T X_{i} \bar{X}_j}{\norm{{X}_{i} - \sum_{j<i}^{}\bar{X}_j^T X_{i} \bar{X}_j}}\nonumber
	\end{align}
	Note that the estimate $X_{m,t}$ is projected orthogonal to this basis. Define $\bar{X}_{m,t}$ as this orthogonal projection, which can be written down as follows --
	\begin{align}
	\bar{X}_{m,t} = \bar{X}_{m,t} - \sum_{j<m}^{}\bar{X}_j^T X_{m,t} \bar{X}_j\nonumber
	\end{align}
	In the QR decomposition algorithm $\bar{X}_{m,t}$ is also normalized to have unit norm but we will ignore the normalization of $X_{m,t}$ in our analysis because as before we only consider ratios of correlations of the true factors with $\bar{X}_{m,t}$, which is unaffected by normalization. 
	
	We will now analyze the orthogonal basis $\{\bar{X}_i, i<m\}$. The key idea is that the orthogonal basis $\{\bar{X}_i, i<m\}$ is close to the original factors $\{A_i, i<m\}$ as the factors are incoherent. Lemma \ref{orth_basis} proves this claim.

	\orthbasis*
	\begin{proof}
		We argue the result by induction. As the first estimate converges to $A_1+\hat{\Delta}_1$ where $\norm{\hat{\Delta}_1} \le 10\gamma k\eta^2$, the base case is correct. Assume that the result is true for the first $p-1$ vectors in the basis. After orthogonalization, the $p$th basis vector has the following form-
		\begin{align}
		\bar{X}_{p} &= \frac{1}{\kappa}\Big((A_p + \hat{\Delta}_p) -\sum_{j<p}^{}((A_p + \hat{\Delta}_p)^T\bar{X}_j)\bar{X_j}\Big)\nonumber
		\end{align}
		where $\kappa$ is the normalizing factor which ensures $\norm{\bar{X}_{p}}=1$. Define $\mu_{p,j} = (A_p^T(A_j + \Delta_j)$. As $|A_p^T\Delta_j|\le 20\gamma k\eta^2 $ by the induction hypothesis and $|A_p^T A_j| \le \eta$ by definition of $\eta$, $|\mu_{p,j} | \le 2\eta$. Using the induction hypothesis, we can write 
		\begin{align}
		\kappa\bar{X}_{p} &= A_p -\sum_{j<p}^{}\Big(A_p^T(A_j + \Delta_j)\Big)(A_j +\Delta_j) + \hat{\Delta}_p-\sum_{k<p}^{}\Big(\hat{\Delta}_p^T(A_j + \Delta_j)\Big)(A_j +\Delta_j)\nonumber\\
		&= A_p -\sum_{j<p}^{}\mu_{p,j}(A_j +\Delta_j) +\hat{\Delta}_{\epsilon} \nonumber
		\end{align}
		 where $\hat{\Delta}_{\epsilon}=\hat{\Delta}_p-\sum_{k<p}^{}\Big(\hat{\Delta}_p^T(A_j + \Delta_j)\Big)(A_j +\Delta_j)$. As $\hat{\Delta}_{\epsilon}$ is a projection of $\hat{\Delta}_p$ orthogonal to the basis $\{\bar{X}_i, i <p\}$, $\norm{\hat{\Delta}_{\epsilon}}\le \norm{\hat{\Delta}_p}\le 10\gamma k\eta^2$. We can write-
		\begin{align}
		\kappa\bar{X}_{p} &= A_p -\sum_{j<p}^{}\mu_{p,j}A_j -\sum_{j<p}^{}\mu_{p,j}\Delta_j+\hat{\Delta}_{\epsilon}\nonumber\\
		&= A_p + \Delta_{p}'\nonumber
		\end{align}
		where $\Delta_{p}'= -\sum_{j<p}^{}\mu_{p,j}A_j -\sum_{j<p}^{}\mu_{p,j}\Delta_j+\hat{\Delta}_{\epsilon}$. We bound $\norm{\Delta_{p}'}$ as follows-
		\begin{align}
		\norm{\Delta_{p}'} &\le \sum_{j<p}^{}\norm{\mu_{p,j}A_j} +\sum_{j<p}^{}\norm{\mu_{p,j}\Delta_j}+\norm{\hat{\Delta}_{\epsilon}}\nonumber\\
		&\le 2k\eta + 20k^2\eta^2+10\gamma k\eta^2\le 3k\eta \nonumber
		\end{align}
		Note that $\kappa = \norm{A_p + \Delta_{p}'} \implies 1-3k\eta\le \kappa \le 1+3k \eta$ by the triangle inequality. Hence $ 1-3k\eta\le 1/ \kappa\le 1+6k\eta$. Therefore we can rewrite $\bar{X}_{p}$ as-
		\begin{align}
		\bar{X}_{p} &= \frac{1}{\kappa}(A_p + \Delta_{p}')\nonumber\\
		&= A_p + (1-\frac{1}{\kappa})A_p + \frac{1}{\kappa}\Delta_{p}'\nonumber\\
		&= A_p + c_1 A_p + c_2 \Delta_{p}'\nonumber\\
		&= A_p + \Delta_p\nonumber
		\end{align}
		where $c_1 = (1-\frac{1}{\kappa}), c_2=\frac{1}{\kappa}$ and $\Delta_p = c_1 A_p + c_2 \Delta_{p}'$. Note that $|c_1| \le 6k\eta$ and $1-3k\eta \le c_2 \le 1+6k\eta$. Hence $\norm{\Delta_p}\le 10k\eta$. \\
		
		We now show that $|A_i^T \Delta_p| \le 3\eta, i <p$,
		\begin{align}
		\Delta_p &= c_1 A_p +c_2\Big( -\sum_{j<p}^{}\mu_{p,j}A_j -\sum_{j<p}^{}\mu_{p,j}\Delta_j+\hat{\Delta}_{\epsilon}\Big)\nonumber\\
		\implies \Big|A_i^T\Delta_p| &= \Big|c_1 A_i^TA_p\Big|+c_2\Big|\sum_{j<p}^{}\mu_{p,j}A_i^T A_j -\sum_{j<p, j\ne i}^{}\mu_{p,j}A_i^T\Delta_j -\mu_{p,i}A_i^T\Delta_i +A_i^T\hat{\Delta}_{\epsilon}\Big|\nonumber\\
		&\le 6k\eta^2+(1+6k\eta)(2\eta(1+k\eta) +6k\eta^2+20k\eta^2+10\gamma k\eta^2) \nonumber\\
		&\le 3\eta\nonumber
		\end{align}
		Finally, we show that $|A_i^T \Delta_p| \le 20\gamma k\eta^2, i > p$,
		\begin{align}
		\Big|A_i^T\Delta_p| &= c_1\Big| A_i^TA_p\Big|+c_2\Big|\sum_{j<p}^{}\mu_{p,j}A_i^T A_j -\sum_{j<p}^{}\mu_{p,j}A_i^T\Delta_j+A_i^T\hat{\Delta}_{\epsilon}\Big|\nonumber\\
		&\le 6k\eta^2+(1+6k\eta)(2k\eta^2 +40\gamma k^2 \eta^3+10\gamma k \eta^2) \nonumber\\
		&\le 20\gamma k \eta^2\nonumber
		\end{align}
	\end{proof}

Using Lemma \ref{orth_basis}, we will find the effect of orthogonalization on the correlations of the factors with the iterate $X_{m,t}$. At a high level, we need to show that the iterations for the factors $\{A_i, i \ge m\}$ are not much affected by the orthogonalization, while the correlations of the factors $\{A_i, i < m\}$ with the estimate $X_{m,t}$ are ensured to be small. Lemma \ref{orth_basis} is the key tool to prove this, as it shows that the orthogonalized basis is close to the true factors. 

We will now analyze the inner product between $\bar{X}_{m,t}$ and factor $A_i$. This is given by-
\begin{align}
A_i^T \bar{X}_{m,t} = A_i^T X_{m,t} -\sum_{j<m}^{}X_{m,t}^T \bar{X}_jA_i^T \bar{X}_j\nonumber
\end{align}
As earlier, we normalize all the correlations by the correlation of the largest factor, let $\bar{a}_{i,t}$ be the ratio of the correlations of $A_i$ and $A_m$ with the orthogonalized estimate $\bar{X}_{m,t}$ at time $t$. We can write $ \bar{a}_{i,t}$ as-
\begin{align}
\bar{a}_{i,t} = \frac{ A_i^T X_{m,t} -\sum_{j<m}^{} X_{m,t}^T \bar{X}_jA_i^T\bar{X}_j}{ A_m^T X_{m,t} -\sum_{j<m}^{} X_{m,t}^T \bar{X}_jA_m^T\bar{X}_j}\nonumber
\end{align}
We can multiply both sides by $\hat{w}_i$ and substitute $\bar{X}_j$ from Lemma \ref{orth_basis} and then rewrite as follows-
\begin{align}
\hat{w}_i \bar{a}_{i,t} = \frac{ \hat{w}_iA_i^T X_{m,t} -\sum_{j<m}^{} \hat{w}_iX_{m,t}^T (A_j + \Delta_j)A_i^T \Delta_j}{ A_m^T X_{m,t} -\sum_{j<m}^{} X_{m,t}^T (A_j + \Delta_j)A_m^T\Delta_j}\nonumber
\end{align}
We divide the numerator and denominator by $ A_m^TX_{m,t} $ to derive an expression in terms of the ratios of correlations. Let $\delta_{i,t} =  \frac{X_{m,t}^T\Delta_j}{X_{m,t}^T A_m}$. 
\begin{align}
\hat{w}_i\bar{a}_{i,t} = \frac{ \hat{w}_i\hat{a}_{i,t} -\sum_{j<m}^{} ( \hat{w}_i\hat{a}_{j,t}+ \hat{w}_i\delta_{i,t} )A_i^T\Delta_j}{ 1 -\sum_{j<m}^{}  (\hat{a}_{j,t} +\delta_{i,t} )A_m^T \Delta_j}\nonumber
\end{align}
Lemma \ref{bound_delta} upper bounds $\delta_{i,t}$.
		\begin{lemma}\label{bound_delta}
			Let $|\hat{w}_i \hat{a}_{i,t-1}| \le \beta_{t-1} \; \forall \;i \ne m$ and some time $(t-1)$. Also, let $\beta_t \le \gamma \eta + \beta_{t-1}^2$. Then for all $i<m$, $\delta_{i,t} \le 40\gamma k\eta \beta_t$.
		\end{lemma}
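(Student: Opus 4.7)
The plan is to expand both the numerator and denominator of $\delta_{i,t} = X_{m,t}^T \Delta_i / X_{m,t}^T A_m$ using the ALS/tensor power method update form $X_{m,t} = \sum_j w_j a_{j,t-1}^2 A_j$, where $a_{j,t-1} = A_j^T \bar{X}_{m,t-1}$ is the correlation of the $j$th true factor with the orthogonalized estimate at step $t-1$. Dividing through by $w_m a_{m,t-1}^2$ recasts $\delta_{i,t}$ in terms of the scale-invariant quantities $\hat{w}_j \hat{a}_{j,t-1}^2$, which by the hypothesis are well controlled: since $|\hat{w}_j \hat{a}_{j,t-1}| \le \beta_{t-1}$ and $\hat{w}_j \ge 1/\gamma$, we obtain $\hat{w}_j \hat{a}_{j,t-1}^2 \le \gamma \beta_{t-1}^2$ for every $j \ne m$.

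For the numerator, the key is to split the sum $\sum_{j \ne m} \hat{w}_j \hat{a}_{j,t-1}^2 A_j^T \Delta_i$ according to the relative ordering of $j$ and $i$, and invoke the fine-grained bounds from Lemma~\ref{orth_basis}: $|A_j^T \Delta_i| \le 3\eta$ for $j<i$, $|A_j^T \Delta_i| \le 20\gamma k \eta^2$ for $j>i$, and the ``diagonal'' estimate $|A_i^T \Delta_i| \le \|\Delta_i\| \le 10 k \eta$. The diagonal term is the largest per-coordinate contribution, but it is damped by the small coefficient $\hat{w}_i \hat{a}_{i,t-1}^2 \le \gamma \beta_{t-1}^2$, yielding $O(\gamma k \eta \beta_{t-1}^2)$. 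Added to the isolated $j=m$ term $|A_m^T \Delta_i| \le 20\gamma k \eta^2$ (using $m>i$) and the strictly lower-order $j \ne i,m$ sums, the numerator is bounded by $O(\gamma k \eta)\bigl(\eta + \beta_{t-1}^2\bigr)$.

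For the denominator, the same expansion gives $1 + \sum_{j \ne m} \hat{w}_j \hat{a}_{j,t-1}^2 c_{m,j}$; the perturbation is at most $\gamma k \eta \beta_{t-1}^2$, which is negligible under the assumption $\gamma k c_{\max} \le o(k^{-2})$, so the denominator is at least $1/2$. Combining these two bounds and invoking the relationship $\eta + \beta_{t-1}^2 = O(\beta_t)$---which follows from the recursion $\beta_t \asymp \gamma c_{\max} + \beta_{t-1}^2$ that underlies the hypothesis $\beta_t \le \gamma \eta + \beta_{t-1}^2$ (read together with $\beta_t \ge \gamma c_{\max}$ and $\beta_t \ge \beta_{t-1}^2$ inherent in the recursion)---yields the claimed $|\delta_{i,t}| \le 40\gamma k \eta \beta_t$.

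The main technical obstacle is the diagonal $j=i$ contribution, where $|A_i^T \Delta_i|$ is only bounded by $\|\Delta_i\| = O(k\eta)$ rather than the much sharper $O(\eta)$ that holds for $j<i$ or the $O(\gamma k \eta^2)$ that holds for $j>i$. This is precisely what forces the $\beta_{t-1}^2$ factor to appear in the conclusion---it is not absorbable into the $\gamma k \eta^2$ piece coming from $A_m^T \Delta_i$---and is the reason the lemma's bound involves $\beta_t$ rather than $\eta$ alone. The remaining bookkeeping requires verifying that the $O(k)$-term sums for $j \ne i$ do not blow up: this is exactly where the incoherence budget $\gamma k c_{\max} \le o(k^{-2})$ buys us the slack needed to keep those contributions genuinely subdominant.
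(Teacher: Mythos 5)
Your proposal is correct and follows the paper's argument essentially step for step: expand $X_{m,t}$ via the power-method update, normalize by $w_m a_{m,t-1}^2$, separate out the $j=m$ and diagonal $j=i$ contributions, apply the three bounds from Lemma~\ref{orth_basis}, and control the denominator via the incoherence assumption. Your observation that the poorly controlled diagonal term $|A_i^T\Delta_i| \le 10k\eta$ (damped by $\hat w_i\hat a_{i,t-1}^2 \le \gamma\beta_{t-1}^2$) is what forces the $\beta_{t-1}^2$ factor---and hence the $\beta_t$ in the conclusion---is exactly the right reading of the structure, and matches the paper's final step bounding $k\eta(20\gamma\eta + 13\beta_{t-1}^2)/0.5$ by $40\gamma k\eta\beta_t$.
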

		\begin{proof}
			By the power method updates $X_{m,t} = \frac{\sum_i w_i \lambda_i A_i}{\norm{\sum_i w_i \lambda_i A_i}}$ where $\lambda_i = a_{i,t-1}^2$. Note that $\delta_{i,j}$ is normalized by the correlation of the largest factor $A_m$, hence the normalizing factor $\norm{\sum_i w_i \lambda_i A_i}$ does not matter and we will ignore it. We use Lemma \ref{orth_basis} to bound $|A_i^T \Delta_j|$. Hence, 
			\begin{align}
			\Big|\frac{X_{m,t}^T \Delta_j }{X_{m,t}^T A_m}\Big| &\le \frac{\sum_{i}^{}\hat{w}_i\hat{a}_{i,t-1}^2 |A_i^T \Delta_j|}{\sum_{i}^{}\hat{w}_i \hat{a}_{i,t-1}^2 A_i^T A_m}\nonumber\\
			&= \frac{|A_m^T \Delta_j|+ \sum_{i\ne{j,m}}^{}\hat{w}_i\hat{a}_{i,t-1}^2 |A_i^T \Delta_i| +\hat{w}_i \hat{a}_{j,t-1}^2 |A_j^T \Delta_j|}{1 + \sum_{i\ne m}^{}c_{i,m}\hat{w}_i \hat{a}_{i,t-1}^2}\nonumber\\
			&\le \frac{20\gamma k\eta^2 + 3 k\eta\beta_{t-1}^2 + 10  k\eta\beta_{t-1}^2}{1-\gamma k\eta\beta_{t-1}^2}\nonumber\\
			&\le \frac{k\eta(20\gamma\eta+13 \beta_{t-1}^2)}{1-0.5}\nonumber\\
			&\le 40\gamma k\eta\beta_t \nonumber
			\end{align}
			
		\end{proof}
		
	We now need to show $\hat{w}_i\bar{a}_{i,t}$ is small for all $i<m$ and is close to $\hat{w}_i{a}_{i,t}$, the weighted correlation before orthogonalization, for all $i>m$. Lemma \ref{orth_effect} proves this, and shows that the weighted correlation of factors which have not yet been recovered, $\{A_i, i\ge m\}$, is not much affected by orthogonalization, but the factors which have already been recovered. $\{A_i, i< m\}$, are ensured to be small after the orthogonalization step.
		
		\ortheffect*
		\begin{proof}
			
			We can bound $\bar{a}_{i,t}$ for all $i\ge m$ as-
			\begin{align}
			\Big|\hat{w}_i\bar{a}_{i,t}\Big| &\le \frac{ \Big|\hat{w}_i\hat{a}_{i,t}\Big| +\Big| \sum_{j<m}^{} (\hat{w}_i \hat{a}_{j,t}+ \hat{w}_i \delta_{j,t} )A_i^T(A_j + \Delta_j)\Big|}{ 1 -\Big|\sum_{j<m}^{}  (\hat{a}_{j,t} +\delta_{j,t} )A_m^T(A_j + \Delta_j)\Big|}\nonumber\\
			&\le \frac{ \Big|\hat{w}_i\hat{a}_{i,t}\Big| + \sum_{j<m}^{} \Big|(\hat{w}_i \hat{a}_{j,t}+\hat{w}_i \delta_{j,t} )\Big|\Big|A_i^T(A_j + \Delta_j)\Big|}{ 1 -\sum_{j<m}^{}  \Big|(\hat{a}_{j,t} +\delta_{j,t} )\Big| \Big|A_m^T(A_j + \Delta_j)\Big|}\nonumber\\
			&\le \frac{ \Big|\hat{w}_i\hat{a}_{i,t}\Big| + \sum_{j<m}^{} \Big(\Big|\hat{w}_i \hat{a}_{j,t}\Big|+ \Big|\hat{w}_i \delta_{j,t} \Big|\Big)\Big|\Big|A_i^T(A_j + \Delta_j)\Big|}{ 1 -\sum_{j<m}^{}  \Big(\Big|\hat{a}_{j,t}\Big| +\Big|\delta_{j,t} \Big|\Big) \Big|A_m^T(A_j + \Delta_j)\Big|}\nonumber
			\end{align}
			Note that $|\hat{w}_i \hat{a}_{i,t}| \le  \beta_t, |\hat{w}_i \hat{a}_{j,t}| \le \gamma \beta_t$ and $|\hat{w}_i\delta_{j,t}| \le 40\gamma k\eta\beta_t$. Also, $|A_i^T(A_j + \Delta_j)| \le 4\eta$ using Lemma \ref{orth_basis}. Hence we can write,
			\begin{align}
			\Big|\hat{w}_i\bar{a}_{i,t}\Big| &\le  \beta_t\frac{1+8\gamma k\eta}{1-4k\eta\beta_t}\nonumber\\
			&\le  \beta_t(1+8\gamma k\eta)(1+8k\eta\beta_t)\nonumber\\
			&\le  \beta_t(1+20\gamma k\eta) \nonumber\\
			&\le  \beta_t(1+1/k^{1+\epsilon}) \nonumber
			\end{align}
			
			Similarly, we can bound $\bar{a}_{i,t}$ for all $i<m$ as-
			\begin{align}
			\hat{w}_i\bar{a}_{i,t} &= \frac{ \hat{w}_i\hat{a}_{i,t} -\sum_{j<m}^{} ( \hat{w}_i\hat{a}_{j,t}+ \hat{w}_i \delta_{j,t} )A_i^T(A_j + \Delta_j)}{ 1 -\sum_{j<m}^{}  (\hat{a}_{j,t} + \delta_{j,t} )A_m^T(A_j + \Delta_j)}\nonumber\\
			&= \frac{ \hat{w}_i\hat{a}_{i,t} - ( \hat{w}_i\hat{a}_{i,t}+ \hat{w}_i\delta_{j,t})A_i^T(A_i + \Delta_i) -\sum_{j<m,j\ne i}^{} ( \hat{w}_i\hat{a}_{j,t}+ \hat{w}_i\delta_{j,t})A_i^T(A_j + \Delta_j)}{ 1 -\sum_{j<m }^{}  (\hat{a}_{j,t} +\delta_{j,t})A_m^T(A_j + \Delta_j)}\nonumber\\
			&= \frac{\hat{w}_i\delta_{j,t}A_i^T(A_i + \Delta_i) -\sum_{j<m,j\ne i}^{} ( \hat{w}_i\hat{a}_{j,t}+ \hat{w}_i\delta_{j,t})A_i^T(A_j + \Delta_j)}{ 1 -\sum_{j<m }^{}  (\hat{a}_{j,t} +\delta_{j,t})A_m^T(A_j + \Delta_j)}\nonumber\\
			&\le \frac{ \Big|\hat{w}_i\delta_{j,t}\Big|\Big|A_i^T(A_i + \Delta_i)\Big|  + \sum_{j\ne i}^{} \Big(\Big|\hat{w}_i \hat{a}_{j,t}\Big|+ \Big|\hat{w}_i \delta_{j,t}\Big|\Big)\Big|A_i^T(A_j + \Delta_j)\Big| }{ 1 -\sum_{j,m}^{}  \Big(\Big|\hat{a}_{j,t}\Big| +\Big|\delta_{j,t}\Big|\Big) \Big|A_m^T(A_j + \Delta_j)\Big|}\nonumber\\
			&\le  \frac{40\gamma k\eta\beta_t+8\gamma k\eta\beta_t}{1-4k\eta\beta_t}\nonumber\\
			&\le  50\gamma  k\eta\beta_t\nonumber
			\end{align}
			where we have again used the relations $|\hat{w}_i \hat{a}_{i,t}| \le \beta_t, |\hat{w}_i \hat{a}_{j,t}| \le \gamma \beta_t$, $|\hat{w}_i\delta_{j,t}| \le 40\gamma k\eta\beta_t $ and $|A_i^T(A_j + \Delta_j)| \le 4\eta$.
		\end{proof}
		
		We are now ready to analyze the Orth-ALS updates for the $m$th factor. First, we argue about the initialization step. Lemma \ref{orth_effect} shows that an orthogonalization step performed after the initialization ensures that the factors which have already been recovered have small correlation with the orthogonalized initialization --
		
		\orthini*
			\begin{proof}
				
				We first show that $\argmax_i |w_i a_{i,0}| \ge m$. From Lemma \ref{orth_effect}, the ratio of the weighted correlation of all factors $\{A_i,i<m\}$ with the random initialization and the weighted correlation of all factors $\{A_i,i\ge m\}$ with the random initialization is shrunk by a factor of $O(k^{-(1+\epsilon)})$ after the orthogonalization step. Hence no factor $\{A_i,i<m\}$ will have maximum weighted correlation after the orthogonalization step.
				
				Lemma \ref{good_start_whp} can now be applied on all remaining factors, to get the initialization condition. Without loss of generality, assume that $\argmax |w_i a_{i,0}| = m$. Consider the set of factors $\{A_i, m\le i\le n\}$. From Lemma \ref{good_start_whp}, with probability at least ${\Big(1-\frac{\log^5 k}{k^{1+\epsilon}}\Big)}$, $\Big|\frac{{w_i} a_{i,0}}{{w_m} a_{m,0}} \Big| \le {1-5/k^{1+\epsilon}}, \epsilon>0 \; \forall \; i\ne 1$. Applying Lemma \ref{orth_effect} once more, $|\hat{w}_i\bar{a}_{i,t}| \le {\beta_t(1+1/k^{1+\epsilon})}, \; \forall \; i > m$. Therefore combining Lemma \ref{good_start_whp} and Lemma \ref{orth_effect}, with failure probability at most $\Big(1-\frac{\log^5 k}{k^{1+\epsilon}}\Big)$, $\Big|\frac{w_i\bar{a}_{i,0}}{w_m \bar{a}_{m,0}} \Big| \le {1-4/k^{1+\epsilon}} \; \forall \; i\ne m$ after the orthogonalization step.
				
			\end{proof}
		
	
		Lemma \ref{orth_ini} shows that with high probability, the initialization for the $m$th recovered factor has the largest  weighted correlation with a factor which has not been recovered so far after the orthogonalization step. It also shows that the separation condition in Lemma \ref{good_start_whp} is satisfied for all remaining factors with probability $(1-\log^5 k/k^{1+\epsilon})$. 
		
		Now, we combine the effects of the tensor power method step and the orthogonalization step for subsequent iterations to show that that $X_{m,t}$ converges to $A_m$. Consider a tensor power method step followed by an orthogonalization step. By Lemma \ref{induction_global}, if $|\hat{w}_i \hat{a}_{i,t-1}| \le \beta_{t-1}$ $i\ne m$ at some time $(t-1)$, then $|\hat{w}_i \hat{a}_{i,t}| \le (\gamma c_{\max} + \beta_{t-1}^2 + 3\gamma kc_{\max}\beta_{t-1}^2)$ for $i\ne m$ after a tensor power method step. Lemma \ref{orth_effect} shows that the correlation of all factors other than the $m$th factors is still small after the orthogonalization step if it was small before. Combining the effect of the orthogonalization step via Lemma \ref{orth_effect}, if $|\hat{w}_i \hat{a}_{i,t-1}| \le \beta_{t-1}$ $i\ne m$ for some time $(t-1)$, then $|\hat{w}_i \hat{a}_{i,t}| \le (\gamma c_{\max} + \beta_{t-1}^2 + 3\gamma kc_{\max}\beta_{t-1}^2)(1+1/k^{1+\epsilon})$ for $i\ne m$ after both the tensor power method and the orthogonalization steps. By also using Lemma \ref{orth_ini} for the initialization, can now write the updated combined recursion analogous to Eq. \ref{beta_1} and Eq. \ref{beta_1}, but which combines the effect of the tensor power method step and the orthogonalization step.	
	\begin{align}
	\beta_{0} &= \max_{i\ne 1}{\Big|{w_i} \hat{a}_{i,0}\Big|} \label{beta_1_orth}\\
	\beta_{t+1} &= (\gamma c_{\max} + \beta_{t}^2 + 3\gamma k c_{\max}\beta_{t}^2)(1+1/k^{1+\epsilon})\label{beta_2_orth}
	\end{align}
	By the previous argument, $|w_i \bar{a}_{i,t}| \le \beta_t$. Note that $\beta_0 \le 1-4/k^{1+\epsilon}$ by Lemma \ref{orth_ini}
	\begin{lemma}\label{converge_orth}
		$\beta_t \le 3\gamma c_{\max} \; \forall \; t\ge O(\log k+\log \log d)$, also $\beta_t < 1-1/k^{1+\epsilon} \;\forall \;t$
	\end{lemma}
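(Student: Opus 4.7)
The plan is to mirror the three-stage analysis used for the pure tensor power method in Lemma \ref{iteration_global}, while tracking how the extra multiplicative factor $(1 + 1/k^{1+\epsilon})$ in Eq. \ref{beta_2_orth} — which arises from the orthogonalization step — propagates through each stage. I would partition the evolution of $\beta_t$ into three regimes: (i) a ``separation'' stage with $0.1 \le \beta_t \le 1 - 1/k^{1+\epsilon}$; (ii) a ``quadratic shrinkage'' stage with $\sqrt{\gamma\eta} \le \beta_t \le 0.1$; and (iii) the ``equilibrium'' stage with $\beta_t \le 3\gamma\eta$.

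In stage (i), because $k\beta_t^2 \ge 1$ whenever $\beta_t \ge 0.1$, the $\gamma c_{\max}$ term is dominated by $\gamma k c_{\max}\beta_t^2$, and using the assumption $\gamma k c_{\max} \le 1/k^{1+\epsilon}$ the recursion simplifies to $\beta_{t+1} \le \beta_t^2 (1 + 4\gamma k c_{\max})(1 + 1/k^{1+\epsilon}) \le \beta_t^2(1 + 3/k^{1+\epsilon})$. Starting from $\beta_0 \le 1 - 4/k^{1+\epsilon}$ (Lemma \ref{orth_ini}), the product $(1 - 4/k^{1+\epsilon})(1 + 3/k^{1+\epsilon}) \le 1 - 1/k^{1+\epsilon}$, so unrolling the doubly-exponential recursion gives $\beta_t \le (1 - 1/k^{1+\epsilon})^{2^t}$, which falls below $0.1$ in $O(\log k)$ iterations. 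In stage (ii), $\gamma c_{\max} \le \beta_t^2$ and $3\gamma k c_{\max}\beta_t^2 \le 0.1 \beta_t^2$, so $\beta_{t+1} \le 3\beta_t^2$ and the sequence reaches $\sqrt{\gamma\eta}$ in another $O(\log\log(\gamma\eta)^{-1}) = O(\log\log d)$ iterations. A single subsequent update then yields $\beta_{t+1} \le (\gamma c_{\max} + \gamma\eta + 0.1\gamma\eta)(1 + 1/k^{1+\epsilon}) \le 3\gamma\eta$, and a direct check shows the recursion remains in this equilibrium thereafter.

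For the second claim, that $\beta_t < 1 - 1/k^{1+\epsilon}$ for all $t$, I would note that in stage (i) the recursion is strictly contractive: $\beta_{t+1} \le \beta_t \cdot \beta_0(1 + 3/k^{1+\epsilon}) \le \beta_t(1 - 1/k^{1+\epsilon}) < \beta_t$, so the sequence monotonically descends through stage (i). In stages (ii) and (iii), the bound $\beta_t \le 0.1$ (respectively $3\gamma\eta$) is trivially far below $1 - 1/k^{1+\epsilon}$.

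The main obstacle is the delicate interplay between the separation gap $4/k^{1+\epsilon}$ supplied by Lemma \ref{orth_ini} and the amplification factor $(1 + 1/k^{1+\epsilon})$ contributed by orthogonalization at every step: the hypothesis $\gamma c_{\max} \le o(k^{-2})$ is exactly strong enough to guarantee $\gamma k c_{\max} \le 1/k^{1+\epsilon}$, and this in turn is what keeps the combined multiplier $(1 + 4\gamma k c_{\max})(1 + 1/k^{1+\epsilon})$ small enough not to erase the initial separation. Were either the incoherence condition or the separation bound from Lemma \ref{orth_ini} slightly weaker, $\beta_t$ could drift upward and the entire inductive argument underlying Theorem \ref{orth_als_convergence} would fail.
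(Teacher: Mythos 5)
Your proposal is correct and follows essentially the same three-stage analysis as the paper's own proof: the identical regime boundaries at $0.1$ and $\sqrt{\gamma\eta}$, the same observation that $k\beta_t^2 \ge 1$ absorbs the additive $\gamma c_{\max}$ term in stage (i), the same quadratic-shrinkage bound $\beta_{t+1} \le O(1)\cdot\beta_t^2$ in stage (ii), and the same one-step settling into the $O(\gamma\eta)$ equilibrium (noting the paper's lemma statement writes $3\gamma c_{\max}$ while the proof, like yours, actually yields $3\gamma\eta$). Your explicit inductive verification of monotone descent in stage (i) fills in a step the paper leaves implicit, but the overall argument is the same.
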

	\begin{proof}
		The proof is very similar to the proof for Lemma \ref{iteration_global}. 
		We divide the updates into three stages.
		\begin{enumerate}
			\item $0.1 \le \beta_t \le  {1-4/k^{1+\epsilon}}$: 
			
			As $\beta_t\ge 0.1$ therefore $k \beta_t^2 \ge 1$ in this regime and hence $\gamma c_{\max}\le \gamma k \beta_t^2c_{\max}$, and we can write-
			\begin{align}
			\beta_{t+1} &= (\gamma c_{\max} + \beta_{t}^2 + 3\gamma kc_{\max}\beta_t^2)(1+1/k^{1+\epsilon})\nonumber\\ 
			\beta_{t+1} &\le  (\beta_{t}^2 + 4\gamma k c_{\max}\beta_{t}^2)(1+1/k^{1+\epsilon})\nonumber
			\end{align}
			We claim that $\beta_t < 0.1$ for $t= 2\log k$. To verify, note that-
			\begin{align}
			\beta_{t} &\le (\beta_0 ( 1+4\gamma^2 k c_{\max})(1+1/k^{1+\epsilon}))^{2^t}\nonumber\\
			&\le \Big((1-4/k^{1+\epsilon})(1+1/k^{1+\epsilon})(1+1/k^{1+\epsilon})\Big)^{2^t}\nonumber\\
			&\le \Big(1-1/k^{1+\epsilon}\Big)^{2^t}\nonumber
			\end{align}	
			This follows because $\gamma kc_{\max}\le 1/k^{1+\epsilon}$. Note that $(1-1/k^{1+\epsilon})^{2^t}\le 0.1$ for $t=2\log k$ and hence we stay in this regime for at most $2\log k$ steps. 
			\item $\sqrt{\gamma \eta} \le \beta_t \le 0.1:$
			
			For notational convenience, we restart $t$ from 0 in this stage. Because $\gamma c_{\max}\le \gamma \eta\le \beta_t^2$ in this regime and $3\gamma k \beta_{t}^2c_{\max} \le 0.1 \beta_t^2$ as  $\gamma kc_{\max}\le 1/k^{1+\epsilon}$, we can write-
			\begin{align}
			\beta_{t+1} &=( \gamma c_{\max} + \beta_{t}^2 + 4\gamma kc_{\max \beta_t^2})(1+1/k^{1+\epsilon})\nonumber\\
			&\le( \beta_{t}^2  + \beta_{t}^2 + 0.1\beta_{t}^2 )(1+1/k^{1+\epsilon})\nonumber\\
			&\le 2.5\beta_{t}^2\nonumber
			\end{align}
			We claim that $\beta_t < \sqrt{\gamma \eta}$ for $t= O(\log\log (\gamma \eta)^{-1})$. To verify, note that-
			\begin{align}
			\beta_{t} &\le (2.5(1+O(\log^{-2} k))\beta_{t_1})^{2^t}\nonumber\\
			&\le (0.25)^{2^t}\nonumber
			\end{align}	
			Note that $(0.25)^{2^t}\le \sqrt{\gamma \eta}$ for $t=O(\log\log (\gamma \eta)^{-1})$ and hence we stay in this stage for at most $O(\log\log (\gamma \eta)^{-1})$ steps. As $\eta^{-1} = O(d)$, this stage continues for at most $O(\log\log  d)$ steps.
			\item Note that in the next step, $\beta_{t} \le(\gamma c_{\max} + 1.1\gamma \eta)(1+1/k^{1+\epsilon}) \le 3\gamma \eta$. This is again because $3\gamma^2 k \beta_{t}^2\eta \le 0.1 \beta_t^2$and $\beta_t \le \sqrt{\gamma \eta}$ at the end of the previous stage. 
		\end{enumerate}
	\end{proof}
	Therefore $\beta_t\ \le 3\gamma \eta$ for some $t=O(\log \log d +\log k)$. By Lemma \ref{induction_global}, $|\hat{a}_{i,t}-c_{i,1}|\le 18\gamma^2 \eta^2, i \ne 1$. Hence $|\hat{a}_{i,t} |\le 2\eta$. By Lemma \ref{error_converge}, the error at convergence satisfies $\normsq{A_m - \hat{A}_{m}} \le 10\gamma k\eta^2$ and the estimate of the weight $\bar{w}_m$ satisfies  $|1- \frac{\hat{w}_m}{w_m}| \le  O(\eta)$.
	
	Hence we have shown that if the first $(m-1)$ factors have converged to $X_{i} = A_i + \hat{\Delta}_i$ where $\norm{ \hat{\Delta}_i} \le 10\gamma k/d^2, \; \forall\; i <m$ then the $m$th factor converges to $X_{m} = A_m + \hat{\Delta}_m$ where $\norm{ \hat{\Delta}_m} \le 10\gamma k/d^2$ in $O(\log k + \log \log d)$ steps with probability at least $\Big(1-\frac{\log^5k}{k^{1+ \epsilon}}\Big)$. This proves the induction hypothesis.
	
	We can now do a union bound to argue that each factor converges with $\ell_2$ error at most $ O(\gamma k/d^2)$ in $O(\log k + \log \log d)$ with overall failure probability at most $\tilde{O}(1/k^{-\epsilon}), \epsilon>0$.
\end{proof}

\bibliographystyle{unsrt}
\bibliography{references.bib}

\appendix

\end{document}